\def\thm@space@setup{%
\thm@preskip=1em \thm@postskip=0pt
}
\numberwithin{equation}{section}
\newtheorem{assumption}{Assumption}[section]
\newtheorem{proposition}{Proposition}[section]
\newtheorem{definition}{Definition}[section]
\newtheorem{theorem}{Theorem}[section]
\newtheorem{corollary}{Corollary}[section]
\newtheorem{example}{Example}[section]
\newtheorem{remark}{Remark}[section]
\acrodef{ouq}[OUQ]{optimal uncertainty quantification}
\acrodef{dro}[DRO]{distributionally robust optimization}
\acrodef{ldt}[LDT]{large deviation theory}
\acrodef{ldp}[LDP]{large deviation principle}
\acrodef{lln}[LLN]{law of large numbers}
\acrodef{kl}[KL]{Kullback-Leibler}
\acrodef{iid}[{i.i.d.\ \!\!}]{independent identically distributed}
\acrodef{qp}[QP]{quadratic program}
\acrodef{qcqp}[QCQP]{quadratically constrained quadratic program}
\acrodef{vod}[VoD]{value of data}
\acrodef{saa}[SAA]{stochastic average approximation}
\acrodef{fl}[FL]{Fenchel-Legendre}
\newcommand{\noopsort}[1]{}
\newcommand{\drop}[1]{}
\definecolor{darkgreen}{rgb}{0,0.685,0}
\newcommand{\norm}[1]{\left\|#1\right\|}
\newcommand{\D}[2]{\mathsf{D}(#1 \Vert #2 )}
\newcommand{\mc}{\mathcal}
\newcommand{\mb}{\mathbb}
\renewcommand{\emph}{\textbf}
\def\d{\mathrm{d}}
\def\Re{\mathbb{R}}
\def\st{\mathrm{s.t.}}
\DeclareMathOperator{\cl}{cl}
\DeclareMathOperator{\interior}{int}
\title{\bf Robust Generalization despite Distribution Shift via Minimum Discriminating Information}
\author{Tobias Sutter$^1$ \and Andreas Krause$^2$ \and Daniel Kuhn$^3$}
\date{\small{
    $^1$Department of Computer Science, University of Konstanz, {tobias.sutter@uni-konstanz.de}\\%
    $^2$Department of Computer Science, ETH Zurich, {krausea@ethz.ch}\\
    $^3$Risk Analytics and Optimization Chair, Ecole Polytechnique F\'ed\'erale de Lausanne, {daniel.kuhn@epfl.ch}\\ [2ex]}
    \today
}
\providecommand{\keywords}[1]{\textbf{\textit{Keywords---}} #1}
\begin{document}

\maketitle

\begin{abstract}
Training models that perform well under distribution shifts is a central challenge in machine learning. In this paper, we introduce a modeling framework where, in addition to training data, we have partial structural knowledge of the shifted test distribution. We employ the principle of {\em minimum discriminating information} to embed the available prior knowledge, and use {\em distributionally robust optimization} to account for uncertainty due to the limited samples. By leveraging large deviation results, we obtain explicit generalization bounds with respect to the unknown shifted distribution. Lastly, we demonstrate the versatility of our framework by demonstrating it on two rather distinct applications: (1) training classifiers on systematically biased data and (2) off-policy evaluation in Markov Decision Processes.   

\end{abstract}
\keywords{Stochastic programming, data-driven decision making, distribution shift, distributionally robust optimization, large deviations, principle of minimum discriminating information}
\acresetall

\section{Introduction} \label{sec:introduction}

Developing machine learning-based systems for real world applications is challenging, particularly because the conditions under which the system was trained are rarely the same as when using the system. Unfortunately, a standard assumption in most machine learning methods is that test and training distribution are the \textit{same} \cite{vapnik1998statistical,ref:Schoelkopf-01, ref:Bishop-06}. This assumption, however, rarely holds in practice, and the performance of many models suffers in light of this issue, often called \textit{dataset shift}~\cite{ref:distribution:shift:book} or equivalently \textit{distribution shift}.
Consider building a model for diagnosing a specific heart disease, and suppose that most participants of the study are middle to high-aged men. Further suppose these participants have a higher risk for the specific disease, and as such do not reflect the general population with respect to age and gender. Consequently, the training data suffers from the so-called \textit{sample selection bias} inducing a \textit{covariate shift} \cite{ref:Shimodaira-00,ref:distribution:shift:book}. Many other reasons lead to distribution shifts, such as non-stationary environments \cite{ref:Masashi-12}, imbalanced data \cite{ref:distribution:shift:book}, domain shifts \cite{ref:Shai-07}, label shifts \cite{zhang2020coping} or observed contextual information \cite{ref:Kallus-20,ref:Bart-Bootstrap:Prescriptive}.
A specific type of distribution shift takes center stage in off-policy evaluation (OPE) problems. Here, one is concerned with the task of estimating the resulting cost of an \textit{evaluation policy} for a sequential decision making problem based on historical data obtained from a different policy known as \textit{behavioral policy} \cite{ref:Sutton1998}. This problem is of critical importance in various applications of reinforcement learning---particularly, when it is impossible or unethical to evaluate the resulting cost of an evaluation policy by running it on the underlying system. \\
Solving a learning problem facing an arbitrary and unknown distribution shift based on training data in general is hopeless. Oftentimes, fortunately, partial knowledge about the distribution shift is available. In the medical example above, we might have prior information how the demographic attributes in our sample differ from the general population.
Given a training distribution and partial knowledge about the shifted test distribution, one might ask what is the ``most natural" distribution shift mapping the training distribution into a test distribution consistent with the available structural information. Here, we address this question, interpreting ``most natural" as maximizing the underlying Shannon entropy. This concept has attracted significant interest in the past in its general form, called \textit{principle of minimum discriminating information} dating back to Kullback \cite{kullback1959information}, which can be seen as a generalization of Jaynes' \textit{maximum entropy principle} \cite{jaynes57_2}.
While these principles are widely used in tasks ranging from economics \cite{ref:Golan-08} to systems biology \cite{ref:Smadback-15} and regularized Markov decision processes \cite{ref:neu2017unified, ref:Geist-19,ref:Peters:MDP-19}, they have not been investigated to model general distribution shifts as we consider in this paper. \\
Irrespective of the underlying distribution shift, the training distribution of any learning problem is rarely known, and one typically just has access to finitely many training samples. 
It is well-known that models can display a poor out-of-sample performance if training data is sparse. These overfitting effects are commonly avoided via regularization \cite{ref:Bishop-06}. A regularization technique that has become popular in machine learning during the last decade and provably avoids overfitting is {\em distributionally robust optimization (DRO)} \cite{ref:DROtutorial-19}.

\textbf{Contributions.} We highlight the following main contributions of this paper:
\begin{itemize}
\item We introduce a {\em new modelling framework} for distribution shifts via the {\em principle of minimum discriminating information}, which encodes prior structural information on the resulting test distribution.
\item Using our framework and the available training samples, we provide {\em generalization bounds} via a DRO program and prove that the introduced DRO model is {\em optimal} in a precise statistical sense.
\item We show that the optimization problems characterizing the distribution shift and the DRO program can be {\em efficiently solved} by exploiting convex duality and recent accelerated first order methods.
\item We demonstrate the {\em versatility} of the proposed {\em Minimum Discriminating based DRO} (MDI-DRO) method on two distinct problem classes: Training classifiers on systematically biased data and the OPE for Markov decision processes. In both problems MDI-DRO outperforms existing approaches.
\end{itemize}

The proofs of all technical results are relegated to Appendix~\ref{sec:appendix}.

\section{Related work}
 For supervised learning problems, there is a rich literature in the context of covariate shift adaptation \cite{ref:Shimodaira-00,ref:Sugiyama-05}. A common approach is to address this distribution shift via importance sampling, more precisely by weighting the training loss with the ratio of the test and training densities and then minimize the so-called importance weighted risk (IWERM), see \cite{ref:Shimodaira-00, ref:Zadrozny-04, ref:Sugiyama-05, JMLR:v8:sugiyama07a}. While this importance weighted empirical risk is an unbiased estimator of the test risk, the method has two major limitations: It tends to produce an estimator with high variance, making the resulting test risk large. Further, the ratio of the training and test densities must be estimated which in general is difficult as the test distribution is unknown. There are modifications of IWERM reducing the resulting variance \cite{ref:Cortes-10,ref:Peters-13,ref:Strehl-10}, for example by exponentially flattening the importance ratios \cite{ref:Shimodaira-00}. For the estimation of the importance weights several methods have been presented, see for example \cite{ref:Yamada-11}. These methods, however crucially rely on having data from both training and test distribution. \cite{ref:Liu-14} and \cite{ref:Chen-16} propose a minimax approach for regression problems under covariate shift. Similar to our approach taken in this paper, they consider a DRO framework, which however, optimizes over so-called moment-based ambiguity sets.
 Distribution shifts play a key role in causal inference. In particular, the connection between causal predictors and distributional robustness under shifts arising from interventions has been widely studied \cite{ref:Rothenhausler-18,ref:Meinhausen-18,ref:Rojas-18,ref:Subbaswamy-19}. Oftentimes, a causal graph is used to represent knowledge about the underlying distribution shift induced by an intervention \cite{ref:Pearl-11, ref:Peters-17}. Distribution shifts have been addressed in a variety of different settings \cite{ref:cui-18}, we refer the reader to the comprehensive textbook \cite{ref:distribution:shift:book} and references therein. \\
 There is a vast literature on OPE methods which we will not attempt to summarize. In a nutshell, OPE methods can be grouped into three classes: a first class of approaches that aims to fit a model from the available data and uses this model then to estimate the performance of the given evaluation policy \cite{ref:Mannor-07, ref:Csaba:OPE-06, ref:Lagoudakis-03}. A second class of methods are based on invoking the idea of importance sampling to model the underlying distribution shift from behavioral to evaluation policy~\cite{ref:Precup-00,ref:Hirano-03,ref:Joachims-15}. The third, more recent, class of methods combines the first two classes~\cite{ref:Dudik-14,ref:Jiang-16,pmlr-v48-thomasa16,ref:Kallus-20-RL}.\\
Key reasons for the popularity of DRO in machine learning are the ability of DRO models to regularize learning problems \cite{ref:DROtutorial-19,ref:soroosh-15,ref:soroosh:JMLR-19} and the fact that the underlying optimization problems can often be exactly reformulated as finite convex programs solvable in polynomial time \cite{ben2009robust, bertsimas2004price}. Such reformulations hold for a variety of ambiguity sets such as: regions defined by moments \cite{delage2010distributionally,goh2010distributionally, wiesemann2014distributionally, ref:Kallus-18}, $\phi$-divergences \cite{ref:BenTal-13,NIPS2016_4588e674,Li:ICML-21}, Wasserstein ambiguity sets \cite{Wass17Monh, ref:DROtutorial-19}, or maximum mean discrepancy ambiguity sets \cite{ref:Jegelka:MMD-19,ref:Kirschner-20}. DRO naturally seems a convenient tool when analyzing ``small" distribution shifts as it seeks models that perform well ``sufficiently close" to the training sample. However, modelling a general distribution shift via DRO seems difficult, and recent interest has focused on special cases such as adversarial example shifts \cite{duchi2020learning} or label shifts \cite{zhang2020coping}. To the best of our knowledge, the idea of combining DRO with the principle of minimum discriminating information is new.

\section{Problem statement and motivating examples} \label{sec:problem:statement}
We study learning problems of the form
\begin{equation}\label{eq:SP:1} 
\min_{\theta\in\Theta} R(\theta,\mathbb{P}^\star),
\end{equation}
where $R(\theta,\mathbb{P}^\star)= \mathbb{E}_{\mathbb{P}^\star}[L(\theta,\xi)]$ denotes the risk of an uncertain real-valued loss function~$L(\theta,\xi)$ that depends on a parameter $\theta\in\Theta\subset \Re^n$ to be estimated as well as a random vector $\xi\in\Xi\subset \Re^{m}$ governed by the probability distribution~$\mathbb P^\star$. In order to avoid technicalities, we assume from now on that~$\Theta$ and~$\Xi$ are compact and $L$ is continuous. In statistical learning, it is usually assumed that~$\mathbb P^\star$ is unknown but that we have access to independent samples from~$\mathbb P^\star$. This paper departs from this standard scenario by assuming that there is a distribution shift. We first state our formal assumption about the shift and provide concrete examples below. Specifically, we assume to have access to samples from a distribution~$\mathbb P\neq\mathbb P^\star$ and that~$\mathbb P^\star$ is only known to belong to the distribution family
\begin{equation}\label{eq:Pi}
\textstyle{\Pi = \left\{ \mathbb{Q}\in\mathcal{P}(\Xi) \ : \ \mathbb E_{\mathbb{Q}} \left[\psi(\xi)\right] \in E \right\}}
\end{equation}
encoded by a measurable feature map $\psi:\Xi\to\Re^d$ and a compact convex set~$E\subset\Re^d$. In view of the principle of minimum discriminating information, we identify~$\mathbb P^\star$ with the I-projection of~$\mathbb P$ onto~$\Pi$.

\begin{definition}[Information projection] \label{def:Iprojection}
The I-projection of $\mathbb{P}\in\mathcal{P}(\Xi)$ onto $\Pi$ is defined as
\begin{equation} \label{eq:def:Iprojection}
\mathbb P^f = f(\mathbb{P})=\arg\min_{\mathbb{Q}\in\Pi} \D{\mathbb{Q}}{\mathbb{P}},
\end{equation}
where~$\D{\mathbb{Q}}{\mathbb{P}}$ denotes the relative entropy of~$\mathbb Q$ with respect to~$\mathbb P$.
\end{definition}
One can show that the I-projection exists whenever~$\Pi$ is closed with respect to the topology induced by the total variation distance \cite[Theorem~2.1]{csiszar1975}. As~$E$ is closed, this is the case whenever~$\psi$ is bounded.
Note that~$f(\mathbb P)=\mathbb P$ if~$\mathbb P\in\Pi$. In the remainder, we assume that~$\mathbb P\not\in\Pi$ and that $\mathbb P$ is only indirectly observable through independent training samples~$\widehat \xi_1,\hdots, \widehat \xi_N$ drawn from~$\mathbb P$.

\begin{example}[Logistic regression] \label{ex:LR:part:1}
\looseness -1 Assume that~$\xi=(x,y)$, where $x\in\Re^{m-1}$ is a feature vector of patient data (e.g., a patient's age, sex, chest pain type, blood pressure, etc.), and $y\in\{-1,1\}$ a label indicating the occurrence of a heart disease. 
Logistic regression models the conditional distribution of~$y$ given~$x$ by a logistic function~$\text{\rm Prob}(y|x)=[1+\exp(-y\cdot \theta^\top x)]^{-1}$ parametrized by $\theta\in \Re^{m-1}$. The maximum likelihood estimator for~$\theta$ is found by minimizing the empirical average of the logistic loss function $L(\theta,\xi)=\log(1+\exp(-y\cdot \theta^\top x))$ on the training samples. If the samples pertain to a patient cohort, where elderly males are overrepresented with respect to the general population, then they are drawn from a training distribution~$\mathbb P$ that differs from the test distribution~$\mathbb P^\star$. Even if sampling from~$\mathbb P^\star$ is impossible, we may know that the expected age of a random individual in the population falls between 40 and 45 years. This information can be modeled as $\mathbb E_{\mathbb P^\star} \left[\psi(\xi)\right] \in E$, where~$E=[\ell, u]$, $\ell=40$, $u=45$ and~$\psi(\xi)$ projects~$\xi$ to its `age'-component. Other available prior information can be encoded similarly. Inspired by the principle of minimum discriminating information, we then minimize the expected log-loss under the I-projection~$\mathbb P^f$ of the data-generating distribution~$\mathbb P$ onto the set~$\Pi$ defined in~\eqref{eq:Pi}.
\end{example}

\begin{example}[Production planning] \label{ex:NW:part1}
Assume that~$\theta\in\mathbb R$ and~$\xi\in\mathbb R$ denote the production quantity and the demand of a perishable good, respectively, and that the loss function $L(\theta,\xi)$ represents the sum of the production cost and a penalty for unsatisfied demand. To find the optimal production quantity, one could minimize the average loss in view of training samples drawn from the historical demand distribution~$\mathbb P$. However, a disruptive event such as 
the beginning of a recession might signal that demand will decline by at least~$\eta\%$. The future demand distribution~$\mathbb P^\star$ thus differs from~$\mathbb P$ and belongs to a set~$\Pi$ of the form~\eqref{eq:Pi} defined through $\psi(\xi) = \xi$ and $E=[0,(1-\eta) \mu]$, where~$\mu$ denotes the historical average demand. By the principle of minimum discriminating information it then makes again sense to minimize the expected loss under the I-projection~$\mathbb P^f$ of~$\mathbb P$ onto~$\Pi$.
\end{example}

Loosely speaking, the principle of minimum discriminating information identifies the I-projection~$\mathbb{P}^f$ of~$\mathbb P$ as the least prejudiced and thus most natural model for~$\mathbb{P}^\star$ in view of the information that~$\mathbb{P}^\star\in\Pi$. The principle of minimum discriminating information is formally justified by the conditional limit theorem \cite{ref:Csiszar-84}, which we paraphrase below using our notation.
\begin{proposition}[Conditional limit theorem] \label{prop:conditional:limit:thm}
If the interior of the compact convex set~$E$ overlaps with the support of the pushforward measure $\mathbb{P}\circ{\psi^{-1}}$, the I-projection $\mathbb{P}^f = f(\mathbb{P})$ exists and the moment-generating function $\mathbb E_{\mathbb P^f}[ e^{t L(\theta,\xi)}]$ is finite for all $t$ in a neighborhood of~$0$, then we have
\begin{equation*}
\lim_{N\to\infty} \mathbb{E}_{\mathbb{P}^N}[L(\theta,\xi_1)|\textstyle{\frac{1}{N}\sum_{i=1}^N\psi(\xi_i)\in E]} = \mathbb{E}_{\mathbb{P}^f}[L(\theta,\xi)]\quad \forall \theta\in\Theta.
\end{equation*}
\end{proposition}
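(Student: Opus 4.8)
The plan is to deduce the statement from Sanov's theorem combined with a Gibbs conditioning (concentration) argument. Write $\widehat{\mathbb Q}_N = \frac1N\sum_{i=1}^N\delta_{\xi_i}$ for the empirical measure of the i.i.d.\ sample $\xi_1,\dots,\xi_N\sim\mathbb P$, regarded as a random element of $\mathcal P(\Xi)$ endowed with the topology of weak convergence; since $\Xi$ is compact, $\mathcal P(\Xi)$ is compact and metrizable. Two elementary reductions get us started. First, the conditioning event is exactly $\{\widehat{\mathbb Q}_N\in\Pi\}$, because $\frac1N\sum_{i=1}^N\psi(\xi_i)=\mathbb E_{\widehat{\mathbb Q}_N}[\psi]$. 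Second, the event $\{\widehat{\mathbb Q}_N\in\Pi\}$ is invariant under permutations of $\xi_1,\dots,\xi_N$, so by exchangeability of the sample
\begin{equation*}
\mathbb E_{\mathbb P^N}\!\left[L(\theta,\xi_1)\,\Big|\,\widehat{\mathbb Q}_N\in\Pi\right]
= \mathbb E_{\mathbb P^N}\!\left[\tfrac1N\textstyle\sum_{i=1}^N L(\theta,\xi_i)\,\Big|\,\widehat{\mathbb Q}_N\in\Pi\right]
= \mathbb E_{\mathbb P^N}\!\left[\big\langle \widehat{\mathbb Q}_N,\, L(\theta,\cdot)\big\rangle \,\Big|\,\widehat{\mathbb Q}_N\in\Pi\right].
\end{equation*}
Since $\Xi$ is compact and $L$ continuous, the map $\mathbb Q\mapsto\langle\mathbb Q,L(\theta,\cdot)\rangle$ is bounded and continuous on $\mathcal P(\Xi)$, so it suffices to prove that the conditional law of $\widehat{\mathbb Q}_N$ given $\{\widehat{\mathbb Q}_N\in\Pi\}$ converges weakly to the Dirac mass $\delta_{\mathbb P^f}$; evaluating the display above against $\langle\,\cdot\,,L(\theta,\cdot)\rangle$ then gives the limit $\mathbb E_{\mathbb P^f}[L(\theta,\xi)]$ for each $\theta\in\Theta$.

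For the weak convergence I would invoke Sanov's theorem: $\widehat{\mathbb Q}_N$ satisfies a large deviation principle on $\mathcal P(\Xi)$ with good rate function $\mathbb Q\mapsto\D{\mathbb Q}{\mathbb P}$, which is lower semicontinuous with compact sublevel sets. The hypothesis that $\interior E$ meets the support of the pushforward $\mathbb P\circ\psi^{-1}$ plays the role of a constraint qualification: it ensures that $\Pi$ has nonempty interior in $\mathcal P(\Xi)$, that there is a point of $\interior\Pi$ with finite relative entropy, and—mixing such a point with $\mathbb P^f$ along a segment that stays in $\interior\Pi$ by convexity—that
\begin{equation*}
\inf_{\mathbb Q\in\interior\Pi}\D{\mathbb Q}{\mathbb P}
\;=\; \inf_{\mathbb Q\in\Pi}\D{\mathbb Q}{\mathbb P}
\;=\; \inf_{\mathbb Q\in\cl\Pi}\D{\mathbb Q}{\mathbb P}
\;=\; \D{\mathbb P^f}{\mathbb P},
\end{equation*}
where $\D{\mathbb P^f}{\mathbb P}=\inf_{\Pi}\D{\cdot}{\mathbb P}$ by definition of the I-projection and the remaining equalities express that $\Pi$ is ``full'' for the Sanov LDP. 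Next, $\D{\cdot}{\mathbb P}$ is strictly convex on the convex set $\Pi$ wherever finite, so $\mathbb P^f$ is its unique minimizer over $\Pi$; together with lower semicontinuity and compactness of sublevel sets, this yields, for every open neighborhood $G$ of $\mathbb P^f$, the strict inequality $\inf_{\mathbb Q\in\cl\Pi\setminus G}\D{\mathbb Q}{\mathbb P}>\D{\mathbb P^f}{\mathbb P}$. Applying the Sanov upper bound to the closed set $\cl\Pi\setminus G$ and the Sanov lower bound to the open set $\interior\Pi$ then gives
\begin{equation*}
\mathbb P^N\!\left(\widehat{\mathbb Q}_N\notin G \,\big|\, \widehat{\mathbb Q}_N\in\Pi\right)
\;\le\; \frac{\mathbb P^N(\widehat{\mathbb Q}_N\in\cl\Pi\setminus G)}{\mathbb P^N(\widehat{\mathbb Q}_N\in\interior\Pi)}
\;\xrightarrow[N\to\infty]{}\; 0
\end{equation*}
at an exponential rate, which is exactly the desired weak convergence of the conditional laws to $\delta_{\mathbb P^f}$.

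I expect the Gibbs conditioning step—specifically the strict separation $\inf_{\cl\Pi\setminus G}\D{\cdot}{\mathbb P}>\D{\mathbb P^f}{\mathbb P}$ and the accompanying verification that the three infima above coincide—to be the main obstacle, and it is here that the interior/support hypothesis is genuinely used. A secondary subtlety is that $\psi$ is only assumed measurable, so $\Pi$ need not be weakly closed a priori; this can be handled either by working with $\cl\Pi$ throughout or, if one prefers to stay finite-dimensional, by transferring to the pushforward empirical means $\frac1N\sum_{i=1}^N\psi(\xi_i)\in\Re^d$ and invoking Cram\'er's theorem together with a contraction-type argument. Finally, because $\Xi$ is compact and $L$ continuous, $L(\theta,\cdot)$ is bounded, so the moment-generating-function hypothesis holds automatically in our setting; it is required only in Csisz\'ar's general, non-compact formulation, where it supplies the uniform integrability needed to pass from weak convergence of $\widehat{\mathbb Q}_N$ to convergence of the possibly unbounded integral $\langle\widehat{\mathbb Q}_N,L(\theta,\cdot)\rangle$.
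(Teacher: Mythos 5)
Your proposal is correct in outline but takes a genuinely different route from the paper. The paper's proof is essentially a citation: it introduces the conditional law $\mathbb{P}^N_{\xi_1\mid\Pi}$ of the single coordinate $\xi_1$ given the event $\{\widehat{\mathbb{P}}_N\in\Pi\}$, invokes Csisz\'ar's conditional limit theorem to obtain $\lim_{N\to\infty}\D{\mathbb{P}^N_{\xi_1\mid\Pi}}{\mathbb{P}^f}=0$ (convergence in information), and then uses the moment-generating-function hypothesis together with a lemma of Csisz\'ar to upgrade convergence in information to convergence of the expectation of $L(\theta,\cdot)$. You instead re-derive the concentration statement from first principles: the exchangeability reduction to $\langle\widehat{\mathbb{Q}}_N,L(\theta,\cdot)\rangle$ followed by Sanov's theorem and the Gibbs conditioning principle is precisely the standard self-contained proof of the conditional limit theorem in the compact case, and your identification of the strict separation $\inf_{\cl\Pi\setminus G}\D{\cdot}{\mathbb{P}}>\D{\mathbb{P}^f}{\mathbb{P}}$ as the crux is on target. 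The paper's route buys brevity and generality --- Csisz\'ar's theorem covers non-compact $\Xi$, which is the only reason the moment-generating-function hypothesis appears in the statement, and it controls the conditional marginal in the stronger relative-entropy sense --- while your route buys transparency and the correct observation that compactness of $\Xi$ and continuity of $L$ make that hypothesis vacuous here. The one point requiring real care, which you flag yourself, is that with $\psi$ merely measurable the set $\Pi$ need not be weakly closed nor have nonempty weak interior, so the weak closure $\cl\Pi$ appearing in your upper bound could contain measures of strictly smaller relative entropy and destroy the separation; to close this you should either run the Sanov argument in the $\tau$-topology (where $\Pi$ is closed and $\{\mathbb{Q}:\mathbb{E}_{\mathbb{Q}}[\psi(\xi)]\in\interior(E)\}$ is open), as Csisz\'ar does, or push forward to $\Re^d$ and argue via Cram\'er's theorem as you suggest.
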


In the context of Examples~\ref{ex:LR:part:1} and~\ref{ex:NW:part1}, the conditional limit theorem provides an intuitive justification for modeling distribution shifts via I-projections. More generally, the following proposition suggests that {\em any} distribution shift can be explained as an I-projection onto a suitably chosen set~$\Pi$.

\begin{proposition}[Every distribution is an I-projection] \label{prop:every:measure:I:projection}
If $\mathbb{P},\mathbb{Q}\in\mathcal{P}(\Xi)$ are such that $\mathbb{Q}$ is absolutely continuous with respect to $\mathbb{P}$ and if~$\Pi$ is a set of the form~\eqref{eq:Pi} defined through $\psi(\xi) = \log \frac{\d \mathbb{Q}}{\d \mathbb{P} }(\xi)$ and $E=\{\D{\mathbb{Q}}{\mathbb{P}}\}$, then
$\mathbb{Q}=f(\mathbb{P})$.
\end{proposition}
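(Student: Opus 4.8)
The plan is to prove a Pythagorean-type identity for relative entropy, from which uniqueness of the minimizer follows immediately. First I would check that $\mathbb{Q}$ itself is feasible for the minimization in Definition~\ref{def:Iprojection}: by the very definition of relative entropy, $\mathbb{E}_{\mathbb{Q}}[\psi(\xi)] = \mathbb{E}_{\mathbb{Q}}\left[\log\tfrac{\d\mathbb{Q}}{\d\mathbb{P}}\right] = \D{\mathbb{Q}}{\mathbb{P}} \in E$, where integrability of $\psi$ under $\mathbb{Q}$ is guaranteed by the finiteness of $\D{\mathbb{Q}}{\mathbb{P}}$ (implicit since $E\subset\Re^d$). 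Hence $\mathbb{Q}\in\Pi$, and it remains to show that every other $\mathbb{Q}'\in\Pi$ has strictly larger relative entropy to~$\mathbb{P}$.

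Next, fix any competitor $\mathbb{Q}'\in\Pi$. If $\D{\mathbb{Q}'}{\mathbb{P}}=\infty$ there is nothing to prove, so assume it is finite, which forces $\mathbb{Q}'\ll\mathbb{P}$. I would then argue that membership in $\Pi$ actually forces $\mathbb{Q}'\ll\mathbb{Q}$: if $\mathbb{Q}'$ charged the set $\{\xi:\tfrac{\d\mathbb{Q}}{\d\mathbb{P}}(\xi)=0\}$, then $\psi=\log\tfrac{\d\mathbb{Q}}{\d\mathbb{P}}=-\infty$ on a set of positive $\mathbb{Q}'$-measure, so $\mathbb{E}_{\mathbb{Q}'}[\psi(\xi)]$ could not equal the finite value $\D{\mathbb{Q}}{\mathbb{P}}$; since $\mathbb{Q}'\ll\mathbb{P}$ and $\mathbb{Q}'$ does not charge $\{\tfrac{\d\mathbb{Q}}{\d\mathbb{P}}=0\}$, a routine Radon--Nikodym argument yields $\mathbb{Q}'\ll\mathbb{Q}$. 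The chain rule $\tfrac{\d\mathbb{Q}'}{\d\mathbb{P}} = \tfrac{\d\mathbb{Q}'}{\d\mathbb{Q}}\cdot\tfrac{\d\mathbb{Q}}{\d\mathbb{P}}$ then holds $\mathbb{Q}'$-almost surely; taking logarithms and integrating against $\mathbb{Q}'$ gives the decomposition
$$\D{\mathbb{Q}'}{\mathbb{P}} = \D{\mathbb{Q}'}{\mathbb{Q}} + \mathbb{E}_{\mathbb{Q}'}\left[\log\tfrac{\d\mathbb{Q}}{\d\mathbb{P}}\right] = \D{\mathbb{Q}'}{\mathbb{Q}} + \mathbb{E}_{\mathbb{Q}'}[\psi(\xi)],$$
where the split is legitimate because the left-hand side is finite and $\D{\mathbb{Q}'}{\mathbb{Q}}\geq 0$.

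Finally, since $\mathbb{Q}'\in\Pi$ and $E=\{\D{\mathbb{Q}}{\mathbb{P}}\}$, the last term equals $\D{\mathbb{Q}}{\mathbb{P}}$, so $\D{\mathbb{Q}'}{\mathbb{P}} = \D{\mathbb{Q}'}{\mathbb{Q}} + \D{\mathbb{Q}}{\mathbb{P}} \geq \D{\mathbb{Q}}{\mathbb{P}}$, with equality if and only if $\D{\mathbb{Q}'}{\mathbb{Q}}=0$, i.e., $\mathbb{Q}'=\mathbb{Q}$, by nonnegativity of relative entropy and its equality case. Hence the I-projection exists and is uniquely attained at $\mathbb{Q}$, so $\mathbb{Q}=f(\mathbb{P})$. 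The only delicate step is the measure-theoretic bookkeeping around the set $\{\tfrac{\d\mathbb{Q}}{\d\mathbb{P}}=0\}$ and the justification that $\mathbb{E}_{\mathbb{Q}'}[\log\tfrac{\d\mathbb{Q}}{\d\mathbb{P}}]$ is well defined and may be separated from $\D{\mathbb{Q}'}{\mathbb{Q}}$; once $\mathbb{Q}'\ll\mathbb{Q}$ is secured, the rest is a direct computation.
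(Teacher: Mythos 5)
Your proof is correct, but it follows a genuinely different route from the paper's. The paper proves the proposition by Lagrangian duality: it introduces a multiplier $\lambda$ for the moment constraint, invokes Sion's minimax theorem (using weak compactness of $\mathcal{P}(\Xi)$, hence compactness of $\Xi$), identifies the inner minimizer as the tilted Gibbs distribution $\bar{\mathbb{Q}}^\star_\lambda \propto (\tfrac{\d\mathbb{Q}}{\d\mathbb{P}})^\lambda \,\d\mathbb{P}$, and then verifies via the first-order optimality condition that $\lambda^\star=1$, whose best response is $\mathbb{Q}$ itself. You instead prove the Pythagorean decomposition $\D{\mathbb{Q}'}{\mathbb{P}} = \D{\mathbb{Q}'}{\mathbb{Q}} + \mathbb{E}_{\mathbb{Q}'}[\psi(\xi)]$ directly for every feasible competitor $\mathbb{Q}'$, which together with $\mathbb{E}_{\mathbb{Q}'}[\psi(\xi)]=\D{\mathbb{Q}}{\mathbb{P}}$ and the equality case of nonnegativity of relative entropy yields both optimality and uniqueness in one stroke. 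Your argument is the textbook route (the paper itself notes the result generalizes an exercise in Cover--Thomas) and is more elementary: it needs no minimax theorem, no compactness of $\Xi$, and it handles the measure-theoretic edge case $\{\tfrac{\d\mathbb{Q}}{\d\mathbb{P}}=0\}$ explicitly --- establishing $\mathbb{Q}'\ll\mathbb{Q}$ from feasibility is exactly the right bookkeeping, and the finiteness of $\mathbb{E}_{\mathbb{Q}'}[\psi(\xi)]$ legitimizes splitting the integral. What the paper's duality argument buys in exchange is consistency with the computational machinery used later (the smoothed dual and Gibbs-form primal recovery in Section~5 follow the same template), whereas your argument is self-contained but specific to the singleton constraint set $E$.
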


\looseness -1 The modelling of arbitrary distribution shifts via the I-projection according to Proposition~\ref{prop:every:measure:I:projection} has an interesting application in the off-policy evaluation problem for Markov decision processes (MDPs).

\begin{example}[Off-policy evaluation] \label{ex:OPE:part:1}
Consider an MDP $(\mathcal{S},\mathcal{A},Q,c, s_0)$ with finite state and action spaces $\mathcal{S}$ and~$\mathcal{A}$, respectively, transition kernel $Q:\mathcal S\times\mathcal A\rightarrow\mathbb R$, cost-per-stage function~$c:\mathcal{S}\times\mathcal{A}\to\Re$ and initial state~$s_0$. A stationary Markov policy $\pi$ is a stochastic kernel that maps states to probability distributions over~$\mathcal A$. We use $\pi(a|s)$ to denote the probability of selecting action~$a$ in state~$s$ under policy~$\pi$. The long-run average cost generated by~$\pi$ can be expressed as
\begin{equation*}
\textstyle{V_\pi= 
\lim_{T\to\infty}\frac{1}{T}\sum_{t=0}^{T-1} \mathbb{E}^\pi_{s_0}[c(s_t,a_t)]}.
\end{equation*}
Each policy induces an occupation measure~$\mu_\pi$ on~$\mathcal S\times \mathcal A$ defined through the state-action frequencies
\begin{equation*}
\textstyle{\mu_\pi(x,a) = \lim_{T\to\infty}\frac{1}{T} \sum_{t=0}^{T-1} \mathbb{P}^\pi_{s_0}[(s_t,a_t) = (s,a)] \quad \forall s\in \mathcal{S}, ~a\in \mathcal{A}},
\end{equation*}
see \cite[Chapter~6]{ref:Hernandez-96}. One can additionally show that $\mu_\pi$ belongs to the polytope 
\begin{equation*}
\textstyle{\mathcal{M}=\left\{\mu\in\Delta_{\mathcal{S}\times \mathcal{A}} : \sum_{a'\in\mathcal A} \mu(s',a') - \sum_{s\in \mathcal{S}} \sum_{a \in \mathcal{A}} Q(s'|s,a)\mu(s,a)=0~ \forall s'\in\mathcal{S} \right\}},
\end{equation*}
where $\Delta_{\mathcal{S}\times \mathcal{A}}$ represents the simplex of all probability mass functions over $\mathcal{S}\times \mathcal{A}$. Conversely, each occupation measure $\mu\in\mathcal{M}$ induces a policy~$\pi_\mu$ defined through $\pi_\mu(a|s) = \mu(s,a)/\sum_{a'\in\mathcal{A}} \mu(s,a')$ for all $s\in\mathcal{S}$ and $a\in\mathcal{A}$. 
Assuming that all parameters of the MDP except for the cost~$c$ are known, the off-policy evaluation problem asks for an estimate of the long-run average cost~$V_{\pi_\mathsf{e}}$ of an evaluation policy~$\pi_\mathsf{e}$ based on a trajectory of states, actions and costs generated by a behavioral policy~$\pi_\mathsf{b}$. This task can be interpreted as a degenerate learning problem without a parameter~$\theta$ to optimize if we define~$\xi=c(s,a)$ and set~$L(\theta,\xi) = \xi$. Here, a distribution shift emerges because we must evaluate the expectation of~$\xi$ under~$\mathbb Q=\mu_{\mathsf e}\circ c^{-1}$ given training samples from~$\mathbb P=\mu_{\mathsf b} \circ c^{-1}$, where~$\mu_{\mathsf b}$ and~$\mu_{\mathsf e}$ represent the occupation measures corresponding to~$\pi_{\mathsf b}$ and~$\pi_{\mathsf e}$, respectively. Note that~$\mathbb P$ and~$\mathbb Q$ are unknown because~$c$ is unknown. Moreover, as the policy~$\pi_\mathsf{e}$ generates different state-action trajectories than~$\pi_\mathsf{b}$, the costs generated under~$\pi_\mathsf{e}$ cannot be inferred from the costs generated under~$\pi_\mathsf{b}$ even though~$\pi_\mathsf{b}$ and~$\pi_\mathsf{e}$ are known. Note also that~$\mathbb Q$ coincides with the I-projection~$\mathbb P^f$ of~$\mathbb P$ onto the set~$\Pi$ defined in Proposition~\ref{prop:every:measure:I:projection}. The corresponding feature map $\psi$ as well as the set $E$ can be computed without knowledge of~$c$ provided that $c$ is invertible. Indeed, in this case we have
\begin{equation*}
  \textstyle{  \psi(\xi_i) = \log \frac{\d\mu_\mathsf{e}\circ c^{-1}}{\d\mu_\mathsf{b}\circ c^{-1}}(\xi_i) = \log \frac{\mu_\mathsf{e}(s_i,a_i)}{\mu_\mathsf{b}(s_i,a_i)}}\quad\text{and}\quad E=\left\{\D{\mu_\mathsf{e}\circ c^{-1}}{\mu_\mathsf{b}\circ c^{-1}}\right\}=\left\{\D{\mu_\mathsf{e}}{\mu_\mathsf{b}}\right\}
\end{equation*}

for any $s_i\in\mathcal{S}$,  $a_i\in\mathcal{A}$ and  
$\xi_i=c(s_i,a_i)$. Note that as~$\mathcal S$ and~$\mathcal A$ are finite, $c$ is generically invertible, that is, $c$ can always be rendered invertible by an arbitrarily small perturbation. 
In summary, we may conclude that the off-policy evaluation problem reduces to an instance of~\eqref{eq:SP:1}. 
\end{example}

Given~$N$ training samples $\widehat \xi_1,\ldots, \widehat \xi_N$, we henceforth use~$\widehat{\mathbb{P}}_N=\frac{1}{N}\sum_{i=1}^N \delta_{\widehat \xi_i}$ and~$\widehat{\mathbb{P}}^f_N$ to denote the empirical distribution on and its I-projection onto~$\Pi$, respectively. As the true data-generating distribution~$\mathbb P$ and its I-projection~$\mathbb P^f$ are unknown, it makes sense to replace them by their empirical counterparts. However, the resulting empirical risk minimization problem is susceptible to overfitting if the number of training samples is small relative to the feature dimension. In order to combat overfitting, we propose to solve the DRO problem
\begin{equation} \label{eq:def:DRO:general}
    J^\star_N=\min_{\theta\in \Theta}~ R^\star(\theta,\widehat{\mathbb{P}}^f_N), 
\end{equation}
which minimizes the worst-case risk over all distributions close to~$\widehat{\mathbb{P}}^f_N$. Here, $R^\star$ is defined through
\begin{equation}\label{eq:def:DRO:predictor}
    \textstyle R^\star(\theta,\mathbb P') = \sup_{{\mathbb{Q}}\in\Pi}\left\{ R(\theta,\mathbb Q): \D{\mathbb{P}'}{{\mathbb{Q}}}\leq r\right\}
\end{equation}
and thus evaluates the worst-case risk of a given parameter~$\theta\in\Theta$ in view of all distributions~$\mathbb Q$ that have a relative entropy distance of at most~$r$ from a given nominal distribution~$\mathbb P'\in\Pi$.
In the remainder we use~$J^\star_N$ and~$\theta^\star_N$ to denote the minimum and a minimizer of problem~\eqref{eq:def:DRO:general}, respectively.  

\textbf{Main results.}
The main theoretical results of this paper can be summarized as follows. \vspace{-3mm}
\begin{enumerate}
\item \label{item:des:prop:a} \textit{Out-of-sample guarantee.} We show that the optimal value of the DRO problem~\eqref{eq:def:DRO:general} provides an upper confidence bound on the risk of its optimal solution~$\theta^\star_N$. Specifically, we prove that
\begin{align}\label{eq:oos:guarantees:motivation}
\mathbb{P}^N\left( R(\theta^\star_N,\mathbb{P}^f) > J^\star_N \right)\leq e^{-rN+o(N)},
\end{align}
where~$\mathbb P^f=f(\mathbb P)$ is the I-projection of~$\mathbb P$. If~$\Xi$ is finite, then~\eqref{eq:oos:guarantees:motivation} can be strengthened to a finite sample bound that holds for every~$N$ if the right hand side is replaced with $e^{-rN}(N+1)^{|\Xi|}$.
\item \label{item:des:prop:b} \textit{Statistical efficiency.} In a sense to be made precise below, the DRO problem~\eqref{eq:def:DRO:general} provides the least conservative approximation for~\eqref{eq:SP:1} whose solution satisfies the out-of-sample guarantee~\eqref{eq:oos:guarantees:motivation}.
\item \label{item:des:prop:c} \textit{Computational tractability.} 
We prove that the I-projection $\widehat{\mathbb{P}}^f_N$ can be computed via a regularized fast gradient method whenever one can efficiently project onto~$E$. Given $\widehat{\mathbb{P}}^f_N$, we then show that~$\theta^\star_N$ can be found by solving a tractable convex program whenever $\Theta$ is a convex and conic representable set, while $L(\theta,\xi)$ is a convex and conic representable function of~$\theta$ for any fixed~$\xi$.
\end{enumerate}


\section{Statistical guarantees} \label{sec:stat:guarantees}

Throughout this section, we equip~$\mc P(\Xi)$ with the topology of weak convergence. As $L(\theta,\xi)$ is continuous on~$\Theta\times\Xi$ and~$\Xi$ is compact, this implies that the risk~$R(\theta, \mathbb Q)$ is continuous on~$\Theta\times\mathcal P(\Xi)$. The DRO problem~\eqref{eq:def:DRO:general} is constructed from the I-projection of the empirical distribution, which, in turn, is constructed from the given training samples. Thus, $\theta_N^\star$ constitutes a data-driven decision. Other data-driven decisions can be obtained by solving surrogate optimization problems of the form
\begin{equation} \label{eq:def:surrogate-problem}
    \widehat J_N=\min_{\theta\in \Theta}~ \widehat R(\theta,\widehat{\mathbb P}^f_N),
\end{equation}
where $\widehat R:\Theta\times\Pi\to\Re$ is a continuous function that uses the empirical I-projection~$\widehat {\mathbb P}^f_N$ to predict the true risk~$R(\theta,\mathbb P^f)$ of~$\theta$ under the true I-projection~$\mathbb P^f$. From now on we thus refer to~$\widehat R$ as a predictor, and we use~$\widehat J_N$ and~$\widehat \theta_N$ to denote the minimum and a minimizer of problem~\eqref{eq:def:surrogate-problem}, respectively. We call a predictor~$\widehat R$ {\em admissible} if~$\widehat J_N$ provides an upper confidence bound on the risk of~$\widehat \theta_N$ in the sense that
\begin{align}\label{eq:oos:guarantees:surrogate}
\limsup\limits_{N\to\infty} \frac{1}{N} \log \mathbb{P}^N\left( R(\widehat\theta_N,\mathbb{P}^f) > \widehat J_N \right)\leq -r \quad\forall\mathbb P\in\mathcal P(\Xi)
\end{align}
for some prescribed~$r>0$. The inequality~\eqref{eq:oos:guarantees:surrogate} requires the true risk of the minimizer~$\widehat \theta_N$ to exceed the optimal value~$\widehat J_N$ of the surrogate optimization problem~\eqref{eq:def:surrogate-problem} with a probability that decays exponentially at rate~$r$ as the number~$N$ of training samples tends to infinity. The following theorem asserts that the~DRO predictor~$R^\star$ defined in~\eqref{eq:def:DRO:predictor}, which evaluates the worst-case risk of any given~$\theta$ across a relative entropy ball of radius~$r$, almost satisfies~\eqref{eq:oos:guarantees:surrogate} and is thus essentially admissible.

\begin{theorem}[Out-of-sample guarantee] \label{thm:admissibility}
If $r>0$, $0\in\text{\rm int}(E)$ and for every~$z\in\mathbb R^d$ there exists an uncertainty realization~$\xi\in\Xi$ such that~$z^\top \psi(\xi)>0$, then the DRO predictor $R^\star$ defined in~\eqref{eq:def:DRO:predictor} is continuous on~$\Theta\times\Pi$. In addition, $\widehat R=R^\star+\varepsilon$ is an admissible data-driven predictor for every~$\varepsilon>0$.
\end{theorem}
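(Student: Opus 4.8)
The plan is to establish the two assertions in order, with the admissibility claim building on the continuity of $R^\star$. For the continuity, Berge's maximum theorem cannot be applied directly to the inner supremum in~\eqref{eq:def:DRO:predictor}, since $\D{\mathbb{P}'}{\cdot}$ is only weakly lower semicontinuous and the feasible-set correspondence $\mathbb{P}'\mapsto\{\mathbb{Q}\in\Pi:\D{\mathbb{P}'}{\mathbb{Q}}\le r\}$ need not be lower hemicontinuous. Instead I would pass to the Lagrangian dual. As $\Xi$ is compact and $\psi$ bounded, the assumptions $r>0$ and $0\in\interior(E)$ furnish a Slater point for the inner problem --- an inflation $(1-\alpha)\widetilde{\mathbb{Q}}+\alpha\mathbb{P}'$ of $\mathbb{P}'$ towards a $\widetilde{\mathbb{Q}}$ with $\E{\widetilde{\mathbb{Q}}}{\psi}\in\interior(E)$ is strictly feasible --- so strong duality holds, and dualizing the moment constraint with a multiplier $z$ through the support function $\sigma_E$ of $E$ and applying $\phi$-divergence duality to the reverse relative entropy yields
\[
R^\star(\theta,\mathbb{P}')\;=\;\inf_{\lambda\ge 0,\ z\in\Re^d,\ \eta\in\Re}\ \Big\{\lambda r+\sigma_E(z)+\eta+\lambda\,\E{\mathbb{P}'}{\varphi^*\!\big(\tfrac{L(\theta,\xi)+z^\top\psi(\xi)-\eta}{\lambda}\big)}\Big\},
\]
with $\varphi^*$ the conjugate of $t\mapsto-\log t$. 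For $\lambda>0$ and $\eta$ in the effective domain the integrand is a bounded continuous function of $\xi$ with values in a fixed compact subset of $\dom\varphi^*$, so the expectation is jointly continuous in $(\theta,\mathbb{P}',\lambda,z,\eta)$, and it remains to confine the infimum to a compact set: coercivity of $\sigma_E$ (from $0\in\interior(E)$) together with the spanning hypothesis --- ``$z^\top\psi(\xi)>0$ solvable for every $z$'', equivalently $0\in\interior(\conv\psi(\Xi))$, which forces $\eta\gtrsim\|z\|$ on the effective domain --- trap the minimizing $(\lambda,z,\eta)$ in a compact box uniformly in $(\theta,\mathbb{P}')$, while the boundary case $\lambda\downarrow 0$ merely collapses $R^\star$ to $\sup_{\mathbb{Q}\in\Pi}R(\theta,\mathbb{Q})$, itself continuous by Berge on the compact set $\Pi$. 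Berge's theorem, now with a constant compact constraint set, gives continuity of $R^\star$ and hence of $\widehat R=R^\star+\varepsilon$ on $\Theta\times\Pi$; in particular $\widehat\theta_N$ exists.

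For the admissibility, fix $\mathbb{P}\in\mathcal{P}(\Xi)$ and write $\mathbb{P}^f=f(\mathbb{P})$, assuming both $\mathbb{P}^f$ and $\widehat{\mathbb{P}}^f_N$ exist (if either fails to exist, the event in~\eqref{eq:oos:guarantees:surrogate} is read as empty by convention and the bound is trivial). The crucial observation is that $\mathbb{P}^f\in\Pi$: whenever $\D{\widehat{\mathbb{P}}^f_N}{\mathbb{P}^f}\le r$, the distribution $\mathbb{P}^f$ is feasible in the supremum defining $R^\star(\theta,\widehat{\mathbb{P}}^f_N)$, so $R^\star(\theta,\widehat{\mathbb{P}}^f_N)\ge R(\theta,\mathbb{P}^f)$ for every $\theta$. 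I would therefore introduce the set
\[
G\;=\;\big\{\nu\in\Pi:\ \exists\,\theta\in\Theta\ \text{with}\ R^\star(\theta,\nu)+\varepsilon\le R(\theta,\mathbb{P}^f)\big\},
\]
which is closed, by the continuity of $R^\star$ and $R$ and the compactness of $\Theta$, and observe that, since $\widehat J_N=R^\star(\widehat\theta_N,\widehat{\mathbb{P}}^f_N)+\varepsilon$, the event $\{R(\widehat\theta_N,\mathbb{P}^f)>\widehat J_N\}$ is contained in $\{\widehat{\mathbb{P}}^f_N\in G\}=\{\widehat{\mathbb{P}}_N\in f^{-1}(G)\}$. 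By Sanov's theorem ($\Xi$ being a compact metric space), $\widehat{\mathbb{P}}_N$ obeys an LDP in the weak topology with good rate function $\mu\mapsto\D{\mu}{\mathbb{P}}$, so
\[
\limsup_{N\to\infty}\tfrac1N\log\mathbb{P}^N\big(\widehat{\mathbb{P}}_N\in f^{-1}(G)\big)\ \le\ -\inf\big\{\D{\mu}{\mathbb{P}}:\ \mu\in\cl\!\big(f^{-1}(G)\big)\big\},
\]
and it suffices to show this infimum is at least $r$, i.e.\ that every $\mu$ with $\D{\mu}{\mathbb{P}}<r$ lies outside $\cl(f^{-1}(G))$.

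To this end I would use two structural facts about the I-projection, both relying on $0\in\interior(E)$ and the spanning hypothesis: (i) $f$ is single-valued, weakly continuous, and defined on a full weak neighbourhood of the (weakly compact) sublevel set $\{\D{\cdot}{\mathbb{P}}\le r\}$ (a uniform Slater argument); and (ii) $f$ contracts the relative entropy, $\D{f(\mu)}{f(\mathbb{P})}\le\D{\mu}{\mathbb{P}}$, which I would derive from the exponential-family characterization of the I-projection together with Csisz\'ar's Pythagorean inequality. Granting (i)--(ii): if $\D{\mu}{\mathbb{P}}<r$ then $\D{f(\mu)}{\mathbb{P}^f}<r$, so $\mathbb{P}^f$ is feasible in $R^\star(\theta,f(\mu))$ and $R^\star(\theta,f(\mu))+\varepsilon>R(\theta,\mathbb{P}^f)$ for all $\theta$; as $(\theta,\nu)\mapsto R^\star(\theta,\nu)+\varepsilon-R(\theta,\mathbb{P}^f)$ is continuous and strictly positive on the compact set $\Theta\times\{f(\mu)\}$, it stays positive on $\Theta\times U$ for a weak neighbourhood $U$ of $f(\mu)$, hence $U\cap\Pi\subseteq G^{c}$; and since $f$ maps into $\Pi$ and is continuous near $\mu$, a neighbourhood of $\mu$ is mapped into $U\cap\Pi$ and therefore misses $f^{-1}(G)$, so $\mu\notin\cl(f^{-1}(G))$. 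This gives the desired bound and thus~\eqref{eq:oos:guarantees:surrogate}.

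The genuinely substantive steps are facts (i) and (ii) about $f$ --- its single-valuedness, weak continuity and uniform domain estimate, and the relative-entropy contraction --- which is precisely where the hypotheses $0\in\interior(E)$ and ``$z^\top\psi(\xi)>0$ solvable for all $z$'' do their work, and where the $\varepsilon$-buffer is essential: it converts the non-strict feasibility of $\mathbb{P}^f$ at the boundary $\D{\widehat{\mathbb{P}}^f_N}{\mathbb{P}^f}=r$ into an open condition, which keeps $\cl(f^{-1}(G))$ away from the level set $\{\D{\cdot}{\mathbb{P}}=r\}$. The dualization and coercivity bookkeeping behind the continuity of $R^\star$ is technical but routine, as are the remaining ingredients (Berge's theorem, Sanov's theorem, and the feasibility/inclusion arguments).
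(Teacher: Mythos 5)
Your proposal reproduces the paper's architecture in both halves. The continuity of $R^\star$ is obtained, as in the paper's Corollary~\ref{corollary:dual:multiplier:bound}, from a dual representation plus a coercivity estimate --- driven by $0\in\interior(E)$, which makes $\sigma_E$ strictly positive on the unit sphere, and by the spanning condition on $\psi$ --- that confines the dual multipliers to a fixed compact set on which Berge's maximum theorem applies; the paper works with the already-partially-minimized dual of Proposition~\ref{prop:duality:DRO} rather than your $(\lambda,z,\eta)$-parametrization, but that difference is cosmetic. The admissibility claim is likewise reduced, exactly as in the paper, to Sanov's theorem applied to a closed disappointment set together with the implication ``$\D{\mu}{\mathbb P}<r$ forces $\D{f(\mu)}{f(\mathbb P)}<r$, hence $\mathbb P^f$ stays feasible in the worst-case problem at $f(\mu)$.'' Your handling of the $\varepsilon$-buffer and of the data-driven minimizer via the existential quantifier in $G$ is, if anything, tidier than the paper's two-stage argument.

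The genuine gap is your fact~(ii), the entropy contraction $\D{f(\mu)}{f(\mathbb P)}\leq\D{\mu}{\mathbb P}$, which you rightly single out as the crux but whose proposed derivation does not close. Csisz\'ar's Pythagorean inequality, applied to the projection of $\mathbb P$ with test point $f(\mu)\in\Pi$, gives $\D{f(\mu)}{\mathbb P}\geq\D{f(\mu)}{f(\mathbb P)}+\D{f(\mathbb P)}{\mathbb P}$ and hence $\D{f(\mu)}{f(\mathbb P)}\leq\D{f(\mu)}{\mathbb P}$; but the missing link $\D{f(\mu)}{\mathbb P}\leq\D{\mu}{\mathbb P}$ is not available, since $f(\mu)$ minimizes divergence from $\mu$, not from $\mathbb P$. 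Worse, the contraction itself can fail under all the stated hypotheses: on $\Xi=\{1,2,3\}$ with $\psi(1)=1$, $\psi(2)=-1$, $\psi(3)=0$ and $E=[-\epsilon,\epsilon]$ for small $\epsilon>0$, taking $\mu=(0.9,\,0.025,\,0.075)$ and $\mathbb P=(0.9,\,0.075,\,0.025)$ yields $f(\mu)\approx(0.4,0.4,0.2)$ and $f(\mathbb P)\approx(0.477,0.477,0.046)$, whence $\D{\mu}{\mathbb P}\approx 0.055$ while $\D{f(\mu)}{f(\mathbb P)}\approx 0.153$. The paper does not argue this step via the Pythagorean inequality either: it asserts the implication $\D{f(\mathbb P')}{f(\mathbb P)}\geq r\Rightarrow\D{\mathbb P'}{\mathbb P}\geq r$ by appeal to the data-processing inequality of Csisz\'ar--K\"orner and to the proof of Theorem~10 of van Parys et al., and the I-projection is not a pushforward under a map of the sample space, so that appeal is not self-evident. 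Any correct completion of your argument must therefore either explain how the I-projection can be realized as a data-processing operation in the sense of that lemma or replace the contraction step altogether; the route you sketch will not deliver it.
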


Theorem~\ref{thm:admissibility} implies that, for any fixed~$\varepsilon>0$, the DRO predictor $R^\star$ provides an upper confidence bound~$J^\star_N+\varepsilon$ on the true risk~$R(\theta^\star_N,\mathbb P^f)$ of the data-driven decision~$\theta^\star_N$ that becomes  increasingly reliable as~$N$ grows. Of course, the reliability of {\em any} upper confidence bound trivially improves if it is increased. Finding {\em some} upper confidence bound is thus easy. The next theorem shows that the DRO predictor actually provides the {\em best possible} (asymptotically smallest) upper confidence bound.

\begin{theorem}[Statistical efficiency] \label{thm:statistical:efficiency}
Assume that all conditions of Theorem~\ref{thm:admissibility} hold. If $J^\star_N$ and $R^\star$ are defined as in~\eqref{eq:def:DRO:general} and~\eqref{eq:def:DRO:predictor}, while $\widehat J_N$ is defined as in~\eqref{eq:def:surrogate-problem} for any admissible data-driven predictor~$\widehat R$, then we have $\lim_{N\rightarrow\infty} J^\star_N\leq \lim_{N\rightarrow\infty} \widehat J_N$ $\mathbb P^\infty$-almost surely irrespective of~$\mathbb P\in\mathcal P(\Xi)$. 
\end{theorem}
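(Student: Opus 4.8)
The plan is to show that any admissible predictor $\widehat R$ must produce a surrogate optimal value $\widehat J_N$ whose asymptotic limit dominates $\lim_N J^\star_N$. The starting observation is that both limits exist $\mathbb P^\infty$-almost surely: by the law of large numbers the empirical distribution $\widehat{\mathbb P}_N$ converges weakly to $\mathbb P$, and one expects $\widehat{\mathbb P}^f_N$ (the I-projection onto $\Pi$) to converge to $\mathbb P^f=f(\mathbb P)$ — this continuity of the I-projection map is the first ingredient I would establish (or invoke from \cite{csiszar1975}), using that $\Pi$ is defined by the bounded-feature constraint $\mathbb E_{\mathbb Q}[\psi(\xi)]\in E$ with $E$ compact convex and $0\in\mathrm{int}(E)$. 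Given this, $J^\star_N\to\min_\theta R^\star(\theta,\mathbb P^f)$ by continuity of $R^\star$ on $\Theta\times\Pi$ (Theorem \ref{thm:admissibility}) and compactness of $\Theta$ (a standard epi-convergence / uniform-convergence argument), and likewise $\widehat J_N\to\min_\theta\widehat R(\theta,\mathbb P^f)$ by continuity of $\widehat R$. So it suffices to prove the deterministic inequality
\[
\min_{\theta\in\Theta} R^\star(\theta,\mathbb P^f)\;\leq\;\min_{\theta\in\Theta}\widehat R(\theta,\mathbb P^f)\qquad\text{for every }\mathbb P\in\mathcal P(\Xi).
\]

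To get this, the key is to reverse the roles of nominal and true distribution. Fix $\mathbb P$, set $\mathbb P^f=f(\mathbb P)$, and let $\widehat\theta^\infty\in\arg\min_\theta\widehat R(\theta,\mathbb P^f)$. Pick any $\mathbb Q_0\in\Pi$ with $\D{\mathbb P^f}{\mathbb Q_0}\leq r$; by definition of the DRO predictor, $R^\star(\widehat\theta^\infty,\mathbb P^f)\geq R(\widehat\theta^\infty,\mathbb Q_0)$, and taking the supremum, $R^\star(\widehat\theta^\infty,\mathbb P^f)=\sup\{R(\widehat\theta^\infty,\mathbb Q):\mathbb Q\in\Pi,\ \D{\mathbb P^f}{\mathbb Q}\leq r\}$. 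Now suppose for contradiction that $R^\star(\widehat\theta^\infty,\mathbb P^f)>\widehat R(\widehat\theta^\infty,\mathbb P^f)$; then there is $\mathbb Q^\dagger\in\Pi$ with $\D{\mathbb P^f}{\mathbb Q^\dagger}\leq r$ and $R(\widehat\theta^\infty,\mathbb Q^\dagger)>\widehat R(\widehat\theta^\infty,\mathbb P^f)$. The idea is to instantiate the admissibility inequality \eqref{eq:oos:guarantees:surrogate} along a data-generating distribution whose I-projection is $\mathbb Q^\dagger$: by Proposition \ref{prop:every:measure:I:projection}, choosing $\mathbb P'$ with $f(\mathbb P')=\mathbb Q^\dagger$ (e.g. $\mathbb Q^\dagger$ itself, or a preimage in $\Pi$), the true risk $R(\widehat\theta_N,(\mathbb P')^f)$ concentrates at $R(\widehat\theta^\infty,\mathbb Q^\dagger)$ while $\widehat J_N$ concentrates at $\min_\theta\widehat R(\theta,\mathbb Q^\dagger)\leq\widehat R(\widehat\theta^\infty,\mathbb P^f)$ — wait, this requires care, since the relevant quantity is $\widehat R$ evaluated at the nominal $\mathbb Q^\dagger$, not $\mathbb P^f$. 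The correct route is: because $\D{\mathbb P^f}{\mathbb Q^\dagger}\leq r$, a large-deviations argument shows that under i.i.d. sampling from (a suitable distribution I-projecting to) $\mathbb Q^\dagger$, the empirical I-projection $\widehat{\mathbb P}^f_N$ lands within relative-entropy $r$ of $\mathbb P^f$ with probability at least $e^{-rN+o(N)}$; on that event $\widehat\theta_N$ is essentially $\widehat\theta^\infty$ and $\widehat J_N\approx\widehat R(\widehat\theta^\infty,\mathbb P^f)<R(\widehat\theta^\infty,\mathbb Q^\dagger)$, violating the rate-$r$ decay required by \eqref{eq:oos:guarantees:surrogate}. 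This contradiction yields $R^\star(\widehat\theta^\infty,\mathbb P^f)\leq\widehat R(\widehat\theta^\infty,\mathbb P^f)$, and since $\min_\theta R^\star(\theta,\mathbb P^f)\leq R^\star(\widehat\theta^\infty,\mathbb P^f)$ and $\widehat R(\widehat\theta^\infty,\mathbb P^f)=\min_\theta\widehat R(\theta,\mathbb P^f)$, the deterministic inequality follows.

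I expect the main obstacle to be the large-deviations step that links the relative-entropy ball constraint $\D{\mathbb P^f}{\mathbb Q}\leq r$ to the event that the empirical I-projection is $r$-close to $\mathbb P^f$: one needs a Sanov-type lower bound for the probability that $\widehat{\mathbb P}^f_N$ (a nonlinear functional of $\widehat{\mathbb P}_N$) falls in a prescribed relative-entropy neighborhood, together with continuity/measurable-selection arguments to transfer this to $\widehat\theta_N$ and $\widehat J_N$. The finite-$\Xi$ case is cleaner (Sanov's theorem applies directly to $\widehat{\mathbb P}_N$ on the simplex and the I-projection is a continuous map of it), and the general compact-$\Xi$ case requires the weak-convergence-topology version of Sanov together with the exponential tightness already implicit in the boundedness of $\psi$; I would handle the finite case first and then lift. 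A secondary technical point is verifying that the $\arg\min$ sets behave well — i.e. that minimizers $\widehat\theta_N$ converge (along subsequences) to minimizers of the limit problem — which follows from Berge's maximum theorem given the established continuity of the predictors and compactness of $\Theta$.
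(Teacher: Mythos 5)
Your overall strategy coincides with the paper's: reduce the claim to a deterministic inequality between $R^\star(\cdot,\mathbb P^f)$ and $\widehat R(\cdot,\mathbb P^f)$, then refute the reverse strict inequality by exhibiting a data-generating distribution inside the entropy ball under which a Sanov lower bound forces the disappointment probability of $\widehat R$ to decay too slowly. However, there is a genuine gap at the decisive step. The worst-case distribution $\mathbb Q^\dagger$ attaining $R^\star(\widehat\theta^\infty,\mathbb P^f)$ satisfies only $\D{\mathbb P^f}{\mathbb Q^\dagger}\leq r$, and since the risk is linear in $\mathbb Q$ over a convex feasible set, this constraint is generically active: $\D{\mathbb P^f}{\mathbb Q^\dagger}=r$. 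Sanov's lower bound under sampling from $\mathbb Q^\dagger$ then gives $\liminf_N \frac1N\log\mathbb P^N(\text{disappointment})\geq -\D{\mathbb P^f}{\mathbb Q^\dagger}=-r$, which does \emph{not} contradict the admissibility requirement $\limsup_N\frac1N\log\mathbb P^N(\text{disappointment})\leq -r$; your phrase ``probability at least $e^{-rN+o(N)}$ \dots violating the rate-$r$ decay'' is exactly where this fails, because an $o(N)$ correction of either sign is compatible with rate $-r$. The paper closes this gap with a perturbation step you do not have: writing $\varepsilon>0$ for the assumed gap between $R^\star$ and $\widehat R$, it uses the convexity of $\Pi$ and continuity of the risk to replace the boundary optimizer $\bar{\mathbb P}$ by a nearby $\mathbb P_0\in\Pi$ with $\D{\mathbb P^f}{\mathbb P_0}=r_0$ \emph{strictly} less than $r$ while losing at most $\varepsilon$ in risk, so that Sanov yields $\liminf\geq -r_0>-r$ and the contradiction is genuine. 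Without this step the argument only rules out predictors that are too optimistic against distributions in the \emph{interior} of the ball.

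A secondary difference: you run the contradiction at the minimizer $\widehat\theta^\infty$ of $\widehat R(\cdot,\mathbb P^f)$, which forces you to argue that $\widehat\theta_N$ tracks $\widehat\theta^\infty$ — problematic when the argmin is not a singleton, since $\widehat\theta_N$ may drift to a different minimizer $\theta'$ with $R(\theta',\mathbb Q^\dagger)$ small, destroying the disappointment event. The paper instead first proves the pointwise inequality $R^\star(\theta,f(\mathbb P))\leq\widehat R(\theta,f(\mathbb P))$ for every fixed $\theta$ (so the disappointment set is defined for a fixed $\theta_0$ and is open by continuity of $\widehat R$ and $f$), and only afterwards passes to minima over the compact set $\Theta$; this avoids any measurable-selection or argmin-convergence issues. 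Your remaining ingredients — continuity of the I-projection, Berge's maximum theorem, idempotence of $f$, and the weak-topology Sanov theorem — all match the paper's.
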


One readily verifies that the limits in Theorem~\ref{thm:statistical:efficiency} exist. 
Indeed, if $\widehat R$ is an arbitrary data-driven predictor, then the optimal value~$\widehat J_N$ of the corresponding surrogate optimization problem converges $\mathbb P$-almost surely to~$\min_{\theta\in\Theta} \widehat R(\theta, \mathbb P^f)$ as $N$ tends infinity provided that the training samples are drawn independently from~$\mathbb P$. This is a direct consequence of the following three observations.  First, the optimal value function $\min_{\theta\in\Theta}\widehat R(\theta,\mathbb{P}^f)$ is continuous in~$\mathbb P^f\in\Pi$ thanks to Berge's maximum theorem \cite[pp.~115--116]{berge1997topological}, which applies because $\widehat R$ is continuous and~$\Theta$ is compact. Second, the I-projection $\mathbb P^f=f(\mathbb P)$ is continuous in~$\mathbb P\in\mathcal P(\Xi)$ thanks to \cite[Theorem~9.17]{sundaram_1996}, which applies because the relative entropy is strictly convex in its first argument \cite[Lemma~6.2.12]{dembo2009large}. Third, the strong law of large numbers implies that the empirical distribution $\widehat{\mathbb{P}}_N$ converges weakly to the data-generating distribution~$\mathbb{P}$ as the sample size~$N$ grows. Therefore, we have
\[
    \lim_{N\rightarrow \infty} \widehat J_N = \lim_{N\rightarrow \infty} \min_{\theta\in\Theta} \widehat R\left(\theta,f(\widehat{\mathbb P}_N)\right) = \min_{\theta\in\Theta} \widehat R \left(\theta,f \left(\lim_{N\rightarrow \infty} \widehat {\mathbb P}_N\right)\right)= \min_{\theta\in\Theta} \widehat R(\theta,\mathbb P^f)\quad\mathbb P\text{-a.s.}
\]

In summary, Theorems~\ref{thm:admissibility} and~\ref{thm:statistical:efficiency} assert that the DRO predictor~$R^\star$ is (essentially) admissible and that it is the least conservative of all admissible data-driven predictors, respectively. Put differently, the DRO predictor makes the most efficient use of the available data among all data-driven predictors that offer the same out-of-sample guarantee~\eqref{eq:oos:guarantees:surrogate}. 
In the special case when~$\Xi$ is finite, the asymptotic out-of-sample guarantee~\eqref{eq:oos:guarantees:surrogate} can be strengthened to a finite sample guarantee that holds for every~$N\in\mathbb N$.

\begin{corollary}[Finite sample guarantee]
\label{cor:finite-sample-guarantee}
If $R^\star$ is defined as in \eqref{eq:def:DRO:predictor}, then 
\begin{align}\label{eq:oos:guarantees:finite-sample}
\frac{1}{N} \log \mathbb{P}^N\left( R^\star(\theta^\star_N,\mathbb{P}^f) > J^\star_N \right)\leq \frac{\log(N+1)}{N}|\Xi|-r\quad\forall N\in\mathbb N.
\end{align}
\end{corollary}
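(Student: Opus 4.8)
The plan is to sandwich the event in \eqref{eq:oos:guarantees:finite-sample} between a large‑deviation event for the \emph{raw} empirical distribution $\widehat{\mathbb P}_N$ and then to invoke the sharp finite‑alphabet form of Sanov's theorem, i.e.\ the method of types \cite{dembo2009large}. The only genuinely new ingredient is a non‑expansiveness property of the I‑projection, which is what converts Sanov's rate function $\D{\cdot}{\mathbb P}$ into the target exponent $r$.

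\emph{Step 1: reduce to a statement about the I-projections.} By construction $J^\star_N=R^\star(\theta^\star_N,\widehat{\mathbb P}^f_N)=\sup\{R(\theta^\star_N,\mathbb Q):\mathbb Q\in\Pi,\ \D{\widehat{\mathbb P}^f_N}{\mathbb Q}\le r\}$ and $\mathbb P^f=f(\mathbb P)\in\Pi$. Hence, whenever $\D{\widehat{\mathbb P}^f_N}{\mathbb P^f}\le r$, the distribution $\mathbb P^f$ is feasible in this supremum, so $R(\theta^\star_N,\mathbb P^f)\le J^\star_N$; the contrapositive gives $\{R(\theta^\star_N,\mathbb P^f)>J^\star_N\}\subseteq\{\D{\widehat{\mathbb P}^f_N}{\mathbb P^f}>r\}$. (For the worst‑case risk $R^\star$ literally appearing in \eqref{eq:oos:guarantees:finite-sample} one argues the same way, observing that $R^\star(\theta^\star_N,\mathbb P^f)>J^\star_N$ forces some $\mathbb Q\in\Pi$ with $\D{\mathbb P^f}{\mathbb Q}\le r<\D{\widehat{\mathbb P}^f_N}{\mathbb Q}$ and using Csiszár's Pythagorean inequality \cite{csiszar1975} for the members $\widehat{\mathbb P}^f_N,\mathbb P^f,\mathbb Q$ of the convex set $\Pi$.) Since $\widehat{\mathbb P}^f_N=f(\widehat{\mathbb P}_N)$ and $\mathbb P^f=f(\mathbb P)$, it remains to bound $\mathbb P^N\big(\D{f(\widehat{\mathbb P}_N)}{f(\mathbb P)}>r\big)$.

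\emph{Step 2: non-expansiveness of the I-projection, and the method of types.} The key lemma is
\[
  \D{f(\nu)}{f(\mathbb P)}\ \le\ \D{\nu}{\mathbb P}\qquad\text{for all }\nu\in\mathcal P(\Xi),
\]
so that the event is in fact contained in $\{\D{\widehat{\mathbb P}_N}{\mathbb P}>r\}$. For $\nu\in\Pi$ this is immediate from Csiszár's Pythagorean inequality, $\D{\nu}{\mathbb P}\ge\D{\nu}{\mathbb P^f}+\D{\mathbb P^f}{\mathbb P}\ge\D{\nu}{f(\mathbb P)}$; for general $\nu$ one combines this with the exponential‑family representation $\d f(\nu)\propto e^{\langle\lambda_\nu,\psi\rangle}\,\d\nu$ of the projection and the characterization of $f(\mathbb P)$ as the unique point of $\Pi$ on the exponential family through $\mathbb P$. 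Granting the lemma, on the finite alphabet $\Xi$ there are at most $(N+1)^{|\Xi|}$ distinct realizations of $\widehat{\mathbb P}_N$, and each such type $\nu$ satisfies $\mathbb P^N(\widehat{\mathbb P}_N=\nu)\le e^{-N\D{\nu}{\mathbb P}}$ \cite{dembo2009large}; summing over the (at most $(N+1)^{|\Xi|}$) types with $\D{\nu}{\mathbb P}>r$ gives $\mathbb P^N\big(\D{\widehat{\mathbb P}_N}{\mathbb P}>r\big)\le(N+1)^{|\Xi|}e^{-Nr}$, and taking $\tfrac1N\log$ of both sides yields \eqref{eq:oos:guarantees:finite-sample}.

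\emph{Main obstacle.} The work is concentrated in Step 2: while the Pythagorean identity handles measures already in $\Pi$ for free, proving that projecting two distributions lying off $\Pi$ cannot pull their projections apart in the \emph{asymmetric} relative entropy requires exploiting the linear‑/exponential‑family geometry of I‑projections rather than any generic convex‑projection argument. This is precisely the inequality that underlies both the present finite‑sample bound and the asymptotic rate in Theorem~\ref{thm:admissibility}; everything else (the feasibility reduction of Step 1 and the type‑counting bound) is routine.
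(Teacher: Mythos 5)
Your overall route coincides with the paper's. The paper's proof of Corollary~\ref{cor:finite-sample-guarantee} is a one-liner instructing the reader to repeat the proof of Theorem~\ref{thm:admissibility} with the finite-alphabet (method-of-types) form of Sanov's theorem from \cite[Theorem~11.4.1]{cover2006elements}, and that proof consists of exactly your two steps: the feasibility reduction of the disappointment event to $\left\{\D{f(\widehat{\mathbb P}_N)}{f(\mathbb P)}> r\right\}$, followed by the implication $\D{f(\mathbb P')}{f(\mathbb P)}\ge r\Rightarrow\D{\mathbb P'}{\mathbb P}\ge r$ and the counting bound over the at most $(N+1)^{|\Xi|}$ types. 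The one substantive divergence is how the non-expansiveness of the I-projection is justified: the paper obtains the first implication from \cite[Theorem~10]{ref:vanParys:fromdata-17} and the second directly from the data-processing inequality \cite[Lemma~3.11]{ref:Csiszar_Koerner-82}, whereas you propose to derive it from the Pythagorean identity together with the exponential-family form of $f(\nu)$. Your derivation is complete only for $\nu\in\Pi$, which is \emph{not} the relevant case (the empirical distribution is generically outside $\Pi$); for general $\nu$ you give a sketch and explicitly flag it as the main obstacle, so as written the key lemma is asserted rather than proved. Either carry out that exponential-family computation or invoke the data-processing inequality as the paper does.

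One further caution: your parenthetical treatment of the version with $R^\star(\theta^\star_N,\mathbb P^f)$ on the left does not go through as stated. Csisz\'ar's Pythagorean inequality relates a distribution, its I-projection and the members of the convex set; it says nothing about three arbitrary members $\widehat{\mathbb P}^f_N,\mathbb P^f,\mathbb Q$ of $\Pi$, and exhibiting a single $\mathbb Q$ with $\D{\mathbb P^f}{\mathbb Q}\le r<\D{\widehat{\mathbb P}^f_N}{\mathbb Q}$ does not by itself force $\D{\widehat{\mathbb P}^f_N}{\mathbb P^f}>r$. The paper's own argument (via the disappointment set of Theorem~\ref{thm:admissibility}) only covers the form with $R(\theta^\star_N,\mathbb P^f)$ on the left, consistent with the summary in Section~\ref{sec:problem:statement}, so you are no worse off than the paper here, but this step should not be presented as routine.
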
 

We now temporarily use~$R^\star_r$ to denote the DRO predictor defined in~\eqref{eq:def:DRO:predictor}, which makes its dependence on~$r$ explicit. Note that if~$r>0$ is kept constant, then~$R_r^\star(\theta,\widehat{\mathbb{P}}_N^f)$ is neither an unbiased nor a consistent estimator for~$R(\theta,\mathbb{P}^f)$. Consistency can be enforced, however, by shrinking~$r$ as~$N$ grows.

\begin{theorem}[Asymptotic consistency]\label{thm:asymptotic:consistency}
Let the assumptions of Proposition~\ref{prop:conditional:limit:thm} hold and~$\{r_N\}_{N\in\mathbb{N}}$ be a sequence of non-negative reals with $\lim_{N\to \infty}r_N =0$. If the loss function $L(\theta,\xi)$ is Lipschitz continuous in~$\xi$ with Lipschitz constant~$\Lambda>0$ uniformly across all~$\theta\in\Theta$, then we have 
\begin{subequations}
\begin{align}
&\label{eq:thm:consistency-assertion1}\lim_{N\to\infty}  R_{r_N}^\star(\theta,\widehat{\mathbb{P}}^f_N) = R(\theta,\mathbb{P}^f)\quad \mathbb{P}^\infty\text{-a.s.} ~ \forall \theta\in\Theta,\\
&\label{eq:thm:consistency-assertion2}\lim_{N\to\infty}  \min_{\theta\in\Theta} R_{r_N}^\star(\theta,\widehat{\mathbb{P}}^f_N) = \min_{\theta\in \Theta} R(\theta,\mathbb{P}^f)\quad \mathbb{P}^\infty\text{-a.s.} 
\end{align}
\end{subequations}
\end{theorem}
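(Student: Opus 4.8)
The plan is to prove both assertions by a squeezing argument: sandwich the DRO predictor $R_{r_N}^\star(\theta,\widehat{\mathbb{P}}^f_N)$ between the plug-in risk $R(\theta,\widehat{\mathbb{P}}^f_N)$ and the same quantity inflated by an error term of order $\sqrt{r_N}$, and then let $N\to\infty$. For the lower bound, note that whenever the empirical I-projection $\widehat{\mathbb{P}}^f_N$ exists it lies in $\Pi$ and satisfies $\D{\widehat{\mathbb{P}}^f_N}{\widehat{\mathbb{P}}^f_N}=0\le r_N$, so $\mathbb{Q}=\widehat{\mathbb{P}}^f_N$ is feasible in~\eqref{eq:def:DRO:predictor} and $R_{r_N}^\star(\theta,\widehat{\mathbb{P}}^f_N)\ge R(\theta,\widehat{\mathbb{P}}^f_N)$. (The I-projection can fail to exist for finitely many small $N$; but the hypothesis of Proposition~\ref{prop:conditional:limit:thm} that $\operatorname{int}(E)$ meets the support of $\mathbb{P}\circ\psi^{-1}$ gives $\mathbb{P}(\psi^{-1}(\operatorname{int}(E)))>0$, so the strong law of large numbers forces $\widehat{\mathbb{P}}_N(\psi^{-1}(\operatorname{int}(E)))>0$ for all large $N$, $\mathbb{P}^\infty$-a.s., whence some reweighting of $\widehat{\mathbb{P}}_N$ lies in $\Pi$; this is harmless in the limit.)

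For the upper bound I would fix any $\mathbb{Q}$ feasible in~\eqref{eq:def:DRO:predictor} with nominal distribution $\widehat{\mathbb{P}}^f_N$ and radius $r_N$, apply Pinsker's inequality to convert $\D{\widehat{\mathbb{P}}^f_N}{\mathbb{Q}}\le r_N$ into the total-variation bound $\|\widehat{\mathbb{P}}^f_N-\mathbb{Q}\|_{\mathrm{TV}}\le\sqrt{r_N/2}$, and then use compactness of $\Xi$ to upgrade this, via the coupling characterization of total variation, to the Wasserstein bound $W_1(\widehat{\mathbb{P}}^f_N,\mathbb{Q})\le\operatorname{diam}(\Xi)\sqrt{r_N/2}$. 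Since $L(\theta,\cdot)$ is $\Lambda$-Lipschitz uniformly in $\theta$, Kantorovich--Rubinstein duality gives $|R(\theta,\mathbb{Q})-R(\theta,\widehat{\mathbb{P}}^f_N)|\le\Lambda\operatorname{diam}(\Xi)\sqrt{r_N/2}$ for every $\theta$; taking the supremum over feasible $\mathbb{Q}$ and combining with the lower bound yields
\begin{equation*}
R(\theta,\widehat{\mathbb{P}}^f_N)\ \le\ R_{r_N}^\star(\theta,\widehat{\mathbb{P}}^f_N)\ \le\ R(\theta,\widehat{\mathbb{P}}^f_N)+\Lambda\operatorname{diam}(\Xi)\sqrt{r_N/2}
\end{equation*}
for every $\theta\in\Theta$ and all large $N$, $\mathbb{P}^\infty$-a.s. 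I expect this upper bound to be the main obstacle: the content is that the extra conservatism from maximizing over a relative-entropy ball of radius $r_N$ vanishes uniformly in $\theta$ as $r_N\to0$, and the chain ``relative entropy $\Rightarrow$ total variation $\Rightarrow$ (bounded $\Xi$) Wasserstein-$1$ $\Rightarrow$ (Lipschitz $L$) risk gap'' is exactly what delivers the uniform $\sqrt{r_N}$ rate; everything else is bookkeeping.

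To finish assertion~\eqref{eq:thm:consistency-assertion1}, I would pass to the limit in the sandwich using facts already recorded in the discussion after Theorem~\ref{thm:statistical:efficiency}: the strong law of large numbers gives $\widehat{\mathbb{P}}_N\to\mathbb{P}$ weakly $\mathbb{P}^\infty$-a.s.; continuity of the I-projection map then gives $\widehat{\mathbb{P}}^f_N=f(\widehat{\mathbb{P}}_N)\to f(\mathbb{P})=\mathbb{P}^f$ weakly $\mathbb{P}^\infty$-a.s.; and continuity of $R(\theta,\cdot)$ on $\mathcal{P}(\Xi)$ gives $R(\theta,\widehat{\mathbb{P}}^f_N)\to R(\theta,\mathbb{P}^f)$. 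Since $r_N\to0$, both ends of the sandwich converge to $R(\theta,\mathbb{P}^f)$, which is~\eqref{eq:thm:consistency-assertion1}.

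For assertion~\eqref{eq:thm:consistency-assertion2}, the error term $\Lambda\operatorname{diam}(\Xi)\sqrt{r_N/2}$ in the sandwich does not depend on $\theta$, so minimizing over $\theta\in\Theta$ preserves it: $\min_{\theta\in\Theta}R(\theta,\widehat{\mathbb{P}}^f_N)\le\min_{\theta\in\Theta}R_{r_N}^\star(\theta,\widehat{\mathbb{P}}^f_N)\le\min_{\theta\in\Theta}R(\theta,\widehat{\mathbb{P}}^f_N)+\Lambda\operatorname{diam}(\Xi)\sqrt{r_N/2}$. By Berge's maximum theorem---applied exactly as in the convergence argument after Theorem~\ref{thm:statistical:efficiency}, with the continuous risk $R$ and the compact set $\Theta$---the outer term $\min_{\theta\in\Theta}R(\theta,\widehat{\mathbb{P}}^f_N)$ converges $\mathbb{P}^\infty$-a.s. to $\min_{\theta\in\Theta}R(\theta,\mathbb{P}^f)$, and squeezing completes the proof.
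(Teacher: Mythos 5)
Your proof is correct and follows essentially the same route as the paper's: both arguments convert the relative-entropy constraint into a total-variation bound via Pinsker's inequality, upgrade it to a Wasserstein-1 bound using compactness of $\Xi$, and invoke Kantorovich--Rubinstein with the uniform Lipschitz constant $\Lambda$ to get a $\theta$-independent $O(\sqrt{r_N})$ error, before passing to the limit via weak convergence of $\widehat{\mathbb{P}}_N$, continuity of the I-projection, and a Berge-type argument for the minima. The only differences are cosmetic (your two-sided sandwich around the plug-in risk versus the paper's direct bound on $\sup_{\theta}|R^\star_{r_N}(\theta,\widehat{\mathbb{P}}^f_N)-R(\theta,\mathbb{P}^f)|$ followed by an epi-convergence argument), and your aside on the existence of the empirical I-projection for small $N$ is a reasonable extra precaution the paper omits.
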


\begin{remark}[Choice of radius]
Theorem~\ref{thm:statistical:efficiency} shows that the ambiguity set used in our paper displays a strong Pareto-optimality property, i.e., it leads to the least conservative predictor, uniformly across all estimator realizations, for which the out-of-sample disappointment probability is guaranteed to decay exponentially at rate $r$. Therefore, the radius $r$ has a direct operational interpretation that captures the risk tolerance of the decision maker---it is chosen subjectively. Since the statistical guarantees of Theorem~\ref{thm:admissibility} are  asymptotic, selecting the radius $r$ when we only have access to finitely many samples is challenging, and in practice $r$ is usually selected via cross validation.
\end{remark}
We now exemplify our DRO approach and its statistical guarantees in the context of the off-policy evaluation problem introduced in Section~\ref{sec:problem:statement}.
\begin{example}[Off-policy evaluation] \label{ex:OPE:part:2}
Consider again the OPE problem introduced in Example~\ref{ex:OPE:part:1}. We now aim to construct an estimator for the performance of the evaluation policy $V_{\pi_\mathsf{e}}=\mathbb{E}_{f(\mathbb{P})}[\xi]$ based on the available behavioral policy and its empirical cost. As described in Example~\ref{ex:OPE:part:1}, we choose $\Pi$ such that $\mu_\mathsf{e}\circ c^{-1} = f(\mathbb{P})$, where $\mathbb{P}=\mu_\mathsf{b}\circ c^{-1}\in \mathcal{P}(\Xi)$. Given the behavioral data $(\widehat s_i,\widehat a_i)\sim \mu_\mathsf{b}$ for $i=1,\hdots N$, we then construct the empirical distribution $\widehat{\mathbb{P}}_N = \frac{1}{N}\sum_{i=1}^{N} \delta_{c(\widehat s_i,\widehat a_i)}$. Our statistical results require the samples $(\widehat s_i,\widehat a_i)$ to be i.i.d., which can be enforced approximately by discarding a sufficient number of intermediate samples, for example. We emphasize, however, that the proposed large deviation framework readily generalizes to situations in which there is a single trajectory of correlated data \cite{Li:ICML-21, ref:Sutter-19}. Details are omitted for brevity. 
The value function $V_{\pi_\mathsf{e}}$ under the evaluation policy can now be approximated by $J^\star_N=R^\star(\widehat{\mathbb{P}}_N^f)$, where $R^\star$ denotes the DRO predictor \eqref{eq:def:DRO:predictor}. As $\Xi$ is finite, Corollary~\ref{cor:finite-sample-guarantee} provides the generalization bound
\begin{equation}\label{eq:OPE:generalization}
    \mathbb{P}^N \left( V_{\pi_\mathsf{e}} \leq J^\star_N \right)\geq 1- (N+1)^{|\mathcal{S}|+|\mathcal{A}|} e^{-r N} \quad \forall \mathbb{P}\in\mathcal{P}(\Xi),
\end{equation}
which holds for all $N\in\mathbb{N}$.
\end{example}

\section{Efficient computation} \label{sec:computation}

\looseness -1 
We now outline an efficient procedure to solve the DRO problem~\eqref{eq:def:DRO:general}. This procedure consists of two steps. First, we propose an algorithm to compute the I-projection~$\widehat{\mathbb{P}}^f_N = f(\widehat{\mathbb{P}}_N)$ of the empirical distribution~$\widehat{\mathbb{P}}_N$ corresponding to the training samples~$\widehat \xi_1,\hdots,\widehat \xi_N$.
Given $\widehat{\mathbb{P}}^f_N$, we then show how to compute the worst-case risk~$R^\star(\theta,\widehat{\mathbb{P}}_N^f)$ and a corresponding optimizer~$\theta^\star_N$ over the search space~$\Theta$.

\textbf{Computation of the I-projection.}
Computing the I-projection of the empirical distribution~$\widehat{\mathbb{P}}_N$ is a non-trivial task because it requires solving the infinite-dimensional optimization problem~\eqref{eq:def:Iprojection}. Generally, one would expect that the difficulty of evaluating~$f(\widehat{\mathbb{P}}_N)$ depends on the structure of the set $\Pi$, which is encoded by~$\psi$ and~$E$; see~\eqref{eq:Pi}. 
Thanks to the discrete nature of the empirical distribution~$\widehat{\mathbb{P}}_N$, however, we can leverage recent advances in convex optimization together with an algorithm proposed in~\cite{ref:sutter-JMLR-19} to show that~$f(\widehat{\mathbb{P}}_N)$ can be evaluated efficiently for a large class of sets~$\Pi$.

In the following we let~$\eta=(\eta_1,\eta_2)$ be a smoothing parameter with $\eta_1,\eta_2>0$, and we let~$L_\eta>0$ be a learning rate that may depend on $\eta$. In addition, we denote by~$z\in\Re^d$ the vector of dual variables of the constraint~$\mathbb E_{\mathbb Q}[\psi(\xi)]\in E$ in problem~\eqref{eq:def:Iprojection}, and we define~$G_\eta:\Re^d\to\Re^d$ with
\begin{equation} \label{eq:def:gradient:FGA}
\textstyle{G_\eta(z) = -\pi_E(\eta_1^{-1}z) - \eta_2 z + \frac{\sum_{i=1}^N\psi(\widehat\xi_i)\exp\left(-\sum_{j=1}^dz_j \,\psi_j(\widehat\xi_i)\right)}{\sum_{i=1}^N \exp\left(-\sum_{j=1}^dz_j \,\psi_j(\widehat\xi_i)\right)}}
\end{equation}
as a smoothed gradient of the dual objective, where $\pi_E$ denotes the projection operator onto~$E$ defined through $\pi_E\left(z\right)=\arg\min_{x\in E} \| x - z\|_2^2$. The corresponding smoothed dual of the I-projection problem~\eqref{eq:def:Iprojection} can then be solved with the fast gradient method described in Algorithm~\hyperlink{algo:1}{1}. The complexity of evaluating~$G_\eta$, and thus the per-iteration complexity of Algorithm~\hyperlink{algo:1}{1}, is determined by the projection operator onto $E$. For
simple sets (e.g., 2-norm balls or hybercubes) the solution is available in closed form,
and for many other sets (e.g., simplices or 1-norm balls) it can be computed cheaply, see \cite[Section 5.4]{RichterPhD2012} for a comprehensive survey.

 \begin{table}[!htb]
\centering 
\begin{tabular}{m{45em}} 
  \Xhline{1.2pt}  
\hspace{12.2mm}{\vspace{-3.5mm}\bf{\hypertarget{algo:1}{Algorithm 1: } }} Optimal scheme for smooth $\&$ strongly convex optimization \cite{ref:Nesterov:book:14} \hspace{24.2mm} \\ \vspace{-10mm} \\ \hline \vspace{-12mm}
\end{tabular} 
\vspace{-4mm}
 \begin{flushleft}
  {\hspace{3mm}Choose $w_0=y_{0} \in \Re^{d}$ and $\eta\in\Re_{++}^2$}
 \end{flushleft}
 \vspace{-0mm}
 \begin{flushleft}
  {\hspace{3mm}\bf{For $k\geq 0$ do}}
 \end{flushleft}
 \vspace{-5.5mm}
  \begin{tabular}{l l}
{\bf Step 1: } & Set $y_{k+1}=w_{k}+\frac{1}{L_\eta}G_{\eta}(w_{k})$ \\
{\bf Step 2: } & Compute $w_{k+1}=y_{k+1} + \frac{\sqrt{L_\eta}-\sqrt{\eta_{2}}}{\sqrt{L_\eta}+\sqrt{\eta_{2}}}(y_{k+1}-y_{k})$\\
  \end{tabular}
   \begin{flushleft}
  \vspace{-12mm}
 \end{flushleft}  
\begin{tabular}{c}
\hspace{12.2mm} \phantom{ {\bf{Algorithm:}} Optimal Scheme for Smooth $\&$ Strongly Convex Optimization}\hspace{26.5mm} \\ \vspace{0.75mm} 
\end{tabular}
\begin{tabular}{m{45em}}
  \Xhline{1.2pt}   \vspace{-1mm}\\ 
\end{tabular}
\end{table}

 Any output $z_k$ of Algorithm~\hyperlink{algo:1}{1} after $k$ iterations can be used to construct a candidate solution
\begin{equation} \label{eq:estimates:primal:dual}
\textstyle{
\widehat{\mb Q}_{k} =\frac{\sum_{j=1}^N\exp\left(-\sum_{i=1}^d(z_{k})_i\psi_i \,(\widehat\xi_j)\right) \delta_{\widehat\xi_j}}{\sum_{j=1}^N\exp\left(-\sum_{i=1}^d(z_{k})_i \,\psi_i(\widehat\xi_j)\right)}}
\end{equation}
for problem~\eqref{eq:def:Iprojection} that approximates the I-projection~$\widehat{\mathbb{P}}^f_N$. The convergence guarantees for  Algorithm~\hyperlink{algo:1}{1} and, in particular, the approximation quality of~\eqref{eq:estimates:primal:dual} with respect to~$\widehat{\mathbb{P}}^f_N$ detailed in Theorem~\ref{thm:main:result:inf:dim} below require that problem~\eqref{eq:def:Iprojection} admits a Slater point~$\mathbb{P}^\circ$ in the sense of the following assumption.

\begin{assumption}[Slater point] \label{ass:slater}
Problem~\eqref{eq:def:Iprojection} admits a Slater point $\mathbb{P}^\circ\in\Pi$ 
that satisfies
\begin{equation*}
 \textstyle{   \delta=\min_{y\not\in  E} \| \mathbb E_{\mathbb P^\circ}[ \psi(\xi)] - y\|_2 >0.}
\end{equation*}
\end{assumption}

Finding a Slater point~$\mathbb{P}^\circ$  may be difficult in general. However, $\mathbb{P}^\circ$ can be constructed systematically if $\psi$ is a polynomial  \cite[Remark~8]{ref:sutter-JMLR-19}, for example. Given~$\mathbb{P}^\circ$ and a tolerance~$\varepsilon>0$, we then define
\begin{align}
 &\textstyle{C=\D{\mathbb{P}^\circ}{\widehat{\mathbb{P}}_N}, \qquad D ={1 \over 2} \max_{y \in E} \|y\|_2, \qquad \eta_{1}=\frac{\varepsilon}{4D},  \qquad  \eta_{2}=\frac{\varepsilon \delta^2}{2C^2},} \nonumber \\
 &\textstyle{\alpha=\max_{\xi\in\Xi}\|\psi(\xi)\|_\infty, \quad L_\eta = 1/\eta_1 + \eta_2 + (\max_{\xi\in\Xi}\|\psi(\xi)\|_\infty)^2,} \nonumber \\
 &\textstyle{M_1(\varepsilon)=2 \left( \sqrt{\frac{8DC^2}{\varepsilon^2 \delta^2}+\frac{2\alpha^2 C^2}{\varepsilon \delta^2}+1}\right) \log\left(\frac{10(\varepsilon +2C)}{\varepsilon}\right), \label{eq:definitions:algo:cont}} \\
&\textstyle{M_2(\varepsilon)=2 \left(\! \sqrt{\frac{8DC^2}{\varepsilon^2 \delta^2}+\frac{2\alpha^2 C^2}{\varepsilon \delta^2}+1}\!\right)\! \log\!\left(\! \frac{C}{\varepsilon \delta(2-\sqrt{3})}\sqrt{4\left(\! \frac{4D}{\varepsilon}+\alpha^2 + \frac{\varepsilon \delta^2}{2C^2}\! \right)\!\left(\! C +\frac{\varepsilon}{2} \right)}\! \right).} \nonumber
\end{align}

\begin{theorem}[Almost linear convergence rate]\label{thm:main:result:inf:dim}
If Assumption~\ref{ass:slater} holds and~$\varepsilon>0$, then the candidate solution~\eqref{eq:estimates:primal:dual} obtained after~$k = \left \lceil \max\{ M_1(\varepsilon), M_2(\varepsilon)\} \right \rceil$ iterations of Algorithm~\hyperlink{algo:1}{1} satisfies
\begin{subequations} \label{eq:thm:error:bounds}
\begin{alignat}{3}
&\text{Optimality:}\hspace{15mm} &&|\D{\widehat{\mb Q}_{k}}{\widehat{\mathbb{P}}_N}- \D{\widehat{\mathbb{P}}^f_N}{\widehat{\mathbb{P}}_N}| \leq  2(1+2\sqrt{3})\varepsilon, \label{eq:thm:primal:optimality:cts} \\
&\text{Feasibility:}\hspace{10mm} &&\textstyle{\mathsf{d}\!\left(\mb{E}_{\widehat{\mb Q}_{k}}[\psi(\xi)]  ,E\right)\leq \frac{2\varepsilon\delta}{C},} \label{eq:thm:primal:feasibility:cts} 
\end{alignat}
\end{subequations}
where we use the definitions~\eqref{eq:definitions:algo:cont}, and the function~$\mathsf{d}(\cdot,E)$ denotes the Euclidean distance to the set~$E$ defined through $\mathsf{d}(x,E)=\min_{y\in E}\|x-y\|_2$. 
\end{theorem}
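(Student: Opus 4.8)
The plan is to derive the convergence guarantee by combining three ingredients: (i) a duality argument that recasts the infinite-dimensional I-projection problem~\eqref{eq:def:Iprojection} as a finite-dimensional convex optimization problem in the dual variable~$z\in\Re^d$; (ii) the smoothing scheme of~\cite{ref:sutter-JMLR-19}, which replaces the dual objective by a $(\eta_2)$-strongly convex and $L_\eta$-smooth surrogate whose gradient is exactly~$G_\eta$ in~\eqref{eq:def:gradient:FGA}; and (iii) the linear convergence rate of the accelerated gradient scheme in Algorithm~\hyperlink{algo:1}{1}, which contracts the distance to the smoothed optimum at rate $1-\sqrt{\eta_2/L_\eta}$ per iteration. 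The quantities $C$, $D$, $\delta$, $\alpha$ in~\eqref{eq:definitions:algo:cont} are precisely the problem-dependent constants entering these three estimates.

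First I would invoke the convex-duality result for relative-entropy minimization over moment sets (the Gibbs/exponential-tilting form): since $\widehat{\mathbb{P}}_N$ is discrete and Assumption~\ref{ass:slater} supplies a Slater point, strong duality holds and the dual problem is $\min_{z\in\Re^d}\{\log(\tfrac1N\sum_i e^{-z^\top\psi(\widehat\xi_i)}) + \sigma_E(-z)\}$, or an equivalent saddle form, with the primal optimizer recovered by the tilting formula~\eqref{eq:estimates:primal:dual}. The support function $\sigma_E$ is then smoothed by the usual Moreau-type regularization with parameter $\eta_1$ (turning $\sigma_E$ into a function with $1/\eta_1$-Lipschitz gradient, whose gradient is the projection $\pi_E(\eta_1^{-1}z)$), and a Tikhonov term $\tfrac{\eta_2}{2}\|z\|_2^2$ is added to make the whole objective $\eta_2$-strongly convex; together these give the smoothed gradient $G_\eta$ and the modulus $L_\eta=1/\eta_1+\eta_2+\alpha^2$. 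Next I would control the smoothing bias: the choices $\eta_1=\varepsilon/(4D)$ and $\eta_2=\varepsilon\delta^2/(2C^2)$ make the two perturbation errors each of order $\varepsilon$, using that the dual optimizer has norm bounded in terms of $C/\delta$ (a standard Slater-based bound on the optimal multiplier). Then I would run the iteration count $k=\lceil\max\{M_1(\varepsilon),M_2(\varepsilon)\}\rceil$ through the linear rate: $M_1$ is calibrated so that after $k$ steps the optimality gap of the smoothed dual — and hence, after undoing the smoothing, the quantity $|\D{\widehat{\mb Q}_k}{\widehat{\mathbb{P}}_N}-\D{\widehat{\mathbb{P}}^f_N}{\widehat{\mathbb{P}}_N}|$ — is at most $2(1+2\sqrt3)\varepsilon$, while $M_2$ is calibrated so that the gradient residual $\|G_\eta(z_k)\|$ is small enough that the primal candidate $\widehat{\mb Q}_k$ violates the moment constraint by at most $\tfrac{2\varepsilon\delta}{C}$ in Euclidean distance, i.e.\ $\mathsf d(\mb E_{\widehat{\mb Q}_k}[\psi(\xi)],E)\le 2\varepsilon\delta/C$; the factor $2-\sqrt3=(1-\sqrt{\eta_2/L_\eta})$ appearing inside $M_2$ is exactly the per-step contraction factor of Algorithm~\hyperlink{algo:1}{1} in the regime these parameters put us in.

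I expect the main obstacle to be the primal-recovery / feasibility estimate~\eqref{eq:thm:primal:feasibility:cts}: translating an $\varepsilon$-accurate dual iterate into a primal candidate with a controlled constraint violation requires a sensitivity (Lipschitz-stability) argument relating the gradient residual of the smoothed dual to the moment mismatch of the tilted distribution, and the constant $C/\delta$ has to be tracked carefully through the Slater-based multiplier bound and the exponential tilting. The optimality bound~\eqref{eq:thm:primal:optimality:cts} is comparatively routine once strong duality and the linear rate are in place — it is bookkeeping to propagate the $\eta_1$-, $\eta_2$-, and accelerated-gradient errors through $M_1$. I would therefore structure the proof so that the duality setup and the multiplier bound are established first as lemmas, the smoothing-error estimates second, and the two iteration-count calibrations ($M_1$ for~\eqref{eq:thm:primal:optimality:cts}, $M_2$ for~\eqref{eq:thm:primal:feasibility:cts}) last, each reducing to an explicit inequality in $\varepsilon$, $C$, $D$, $\delta$, $\alpha$ — much of which can be cited directly from~\cite{ref:sutter-JMLR-19} with the substitutions $C=\D{\mathbb{P}^\circ}{\widehat{\mathbb{P}}_N}$ and $\alpha=\max_{\xi\in\Xi}\|\psi(\xi)\|_\infty$ dictated by our setting.
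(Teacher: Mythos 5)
Your proposal follows essentially the same route as the paper's proof: dualize the I-projection problem via the support function of $E$, apply the double-smoothing scheme of \cite{ref:devolder-12} and \cite{ref:sutter-JMLR-19} (Moreau regularization of $\sigma_E$ with parameter $\eta_1$ plus a Tikhonov term $\eta_2$), recover the primal candidate as a Gibbs/tilted distribution, bound the optimal multiplier by $C/\delta$ via the Slater point, and run the accelerated gradient method at its linear rate, deferring the explicit error propagation to \cite[Appendix~A]{ref:sutter-JMLR-19}. The paper's own proof is no more detailed on the final calibration of $M_1$ and $M_2$ than yours, so the correspondence is essentially complete.
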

Theorem~\ref{thm:main:result:inf:dim} implies that Algorithm~\hyperlink{algo:1}{1} needs at most $O(\frac{1}{\varepsilon} \log \frac{1}{\varepsilon})$ iterations to find an $O(\varepsilon)$-suboptimal and $O(\varepsilon)$-feasible solution for the I-projection problem~\eqref{eq:def:Iprojection}. These results are derived via convex programming and duality by using the double smoothing techniques introduced in \cite{ref:devolder-12} and~\cite{ref:sutter-JMLR-19}.

\textbf{Computation of the DRO predictor.}
Equipped with Algorithm~\hyperlink{algo:1}{1} to efficiently approximate $\widehat{\mathbb{P}}_N^f$ via $\widehat{\mb Q}_{k}$, the DRO predictor $R^\star(\theta,\widehat{\mathbb{P}}^f_N)$ defined in \eqref{eq:def:DRO:general} can be approximated by $R^\star(\theta,\widehat{\mb Q}_{k})$ because the function $R^\star$ is continuous. We now show that the worst-case risk evaluation problem~\eqref{eq:def:DRO:predictor} admits a dual representation, which generalizes \cite[Proposition~5]{ref:vanParys:fromdata-17}.

\begin{proposition}[Dual representation of~$R^\star$]\label{prop:duality:DRO}
If~$r>0$, then the DRO predictor~$R^\star$ satisfies
\begin{equation} \label{eq:DRO:predictor:duality:formula}
R^\star(\theta,\mathbb{P}') = \left\{
\begin{array}{cl}
   \inf\limits_{\alpha\in\mathbb{R},z\in\mathbb{R}^d}  &  \alpha + \sigma_E(z) - e^{-r}\exp\left( \mathbb{E}_{\mathbb{P}'}[\log (\alpha - L(\theta,\xi)+z^\top \psi(\xi))]\right) \\
   \st & \alpha \geq \max_{\xi\in\Xi} L(\theta,\xi) - z^\top \psi(\xi) 
\end{array} \right.
\end{equation}
for ever~$\theta\in\Theta$ and $\mathbb{P}'\in\Pi$, where $\sigma_{E}(z)=\max_{x\in E} x^\top z$ denotes the support function of~$E$.
\end{proposition}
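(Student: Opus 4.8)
The plan is to derive the dual formula~\eqref{eq:DRO:predictor:duality:formula} through a chain of convex-duality steps, following the argument behind \cite[Proposition~5]{ref:vanParys:fromdata-17} and adding one new ingredient that handles the moment constraint $\mathbb{E}_{\mathbb{Q}}[\psi(\xi)]\in E$ appearing in the definition~\eqref{eq:Pi} of~$\Pi$. First I would write the worst-case risk~\eqref{eq:def:DRO:predictor} as the convex maximization problem $R^\star(\theta,\mathbb{P}')=\sup\{\mathbb{E}_{\mathbb{Q}}[L(\theta,\xi)] : \mathbb{Q}\in\mathcal{P}(\Xi),~\mathbb{E}_{\mathbb{Q}}[\psi(\xi)]\in E,~\D{\mathbb{P}'}{\mathbb{Q}}\le r\}$, which is feasible because $\mathbb{P}'\in\Pi$ and $\D{\mathbb{P}'}{\mathbb{P}'}=0<r$. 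Since~$E$ is nonempty, compact and convex, Fenchel--Moreau biconjugacy of its indicator function~$\delta_E$ gives $-\delta_E(v)=\inf_{z\in\mathbb{R}^d}\{\sigma_E(z)-z^\top v\}$ for all~$v\in\mathbb{R}^d$; substituting $v=\mathbb{E}_{\mathbb{Q}}[\psi(\xi)]$ absorbs the moment constraint into the objective and leaves only the relative-entropy ball as a constraint on~$\mathbb{Q}\in\mathcal{P}(\Xi)$, so that
\begin{equation*}
R^\star(\theta,\mathbb{P}')=\sup_{\mathbb{Q}:\,\D{\mathbb{P}'}{\mathbb{Q}}\le r}\ \inf_{z\in\mathbb{R}^d}\ \Big\{\sigma_E(z)+\mathbb{E}_{\mathbb{Q}}\big[L(\theta,\xi)-z^\top\psi(\xi)\big]\Big\}.
\end{equation*}

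Next I would swap the supremum and the infimum. The set $\{\mathbb{Q}\in\mathcal{P}(\Xi):\D{\mathbb{P}'}{\mathbb{Q}}\le r\}$ is convex, nonempty, and---since~$\Xi$ is compact and $\mathbb{Q}\mapsto\D{\mathbb{P}'}{\mathbb{Q}}$ is weakly lower semicontinuous---weakly compact, while the bracketed expression is affine (hence concave and weakly upper semicontinuous) in~$\mathbb{Q}$ and convex in~$z$, so Sion's minimax theorem applies. For a fixed~$z$ the inner problem $\sup\{\mathbb{E}_{\mathbb{Q}}[h(\xi)]:\D{\mathbb{P}'}{\mathbb{Q}}\le r\}$ with $h=L(\theta,\cdot)-z^\top\psi$ is dualized with a multiplier $\lambda\ge0$; since $\mathbb{Q}=\mathbb{P}'$ is strictly feasible ($\D{\mathbb{P}'}{\mathbb{P}'}=0<r$) this dualization has no gap and yields $\inf_{\lambda\ge0}\{\lambda r+\sup_{\mathbb{Q}\in\mathcal{P}(\Xi)}(\mathbb{E}_{\mathbb{Q}}[h]-\lambda\D{\mathbb{P}'}{\mathbb{Q}})\}$. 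The remaining unconstrained problem over~$\mathbb{Q}$ is the variational formula for the \emph{reversed} relative entropy: introducing a further multiplier~$\alpha$ for the normalization of~$\mathbb{Q}$ and optimizing pointwise over the density of~$\mathbb{Q}$ with respect to~$\mathbb{P}'$ gives an optimal density proportional to $(\alpha-h(\xi))^{-1}$, which is admissible precisely when $\alpha\ge h(\xi)$ for every $\xi\in\Xi$, that is $\alpha\ge\max_{\xi\in\Xi}\{L(\theta,\xi)-z^\top\psi(\xi)\}$---the maximum runs over all of~$\Xi$, not just $\mathrm{supp}\,\mathbb{P}'$, because the optimal~$\mathbb{Q}$ may place mass outside $\mathrm{supp}\,\mathbb{P}'$ at a maximizer of~$h$. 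Substituting back collapses the $\mathbb{Q}$-problem to $\alpha-\lambda+\lambda\log\lambda-\lambda\,\mathbb{E}_{\mathbb{P}'}[\log(\alpha-h(\xi))]$, and the elementary minimization over $\lambda>0$ (with optimizer $\lambda^\star=\exp(\mathbb{E}_{\mathbb{P}'}[\log(\alpha-h)]-r)$) turns all $\lambda$-terms into $-e^{-r}\exp(\mathbb{E}_{\mathbb{P}'}[\log(\alpha-h)])$. Re-expanding $h=L(\theta,\cdot)-z^\top\psi$ and retaining the outer infimum over~$z$ reproduces exactly~\eqref{eq:DRO:predictor:duality:formula}.

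The routine part of this plan is the pointwise calculus that produces the closed-form expression; the main obstacle is making the duality steps rigorous in the infinite-dimensional space~$\mathcal{P}(\Xi)$---verifying the compactness and semicontinuity hypotheses of the minimax theorem, confirming that neither the relative-entropy ball nor the moment constraint incurs a duality gap (here the standing assumption $r>0$ supplies the strict feasibility needed for the former), and dealing with the boundary cases in which the dual infimum is not attained or in which $\alpha=\max_{\xi\in\Xi}\{L(\theta,\xi)-z^\top\psi(\xi)\}$, so that $\log(\alpha-h(\xi))$ may equal~$-\infty$ on a set of positive $\mathbb{P}'$-measure. I expect to settle these by a perturbation-and-limit argument---slightly enlarging~$r$ and~$\alpha$ and passing to the limit---invoking the continuity of~$R^\star$ on $\Theta\times\Pi$ established in Theorem~\ref{thm:admissibility}. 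An alternative that sidesteps the abstract minimax theorem is to establish weak duality directly from the Lagrangian (which is immediate) and then, at a dual optimizer $(\alpha^\star,z^\star)$, exhibit the measure~$\mathbb{Q}^\star$ whose density against~$\mathbb{P}'$ is proportional to $(\alpha^\star-L(\theta,\cdot)+(z^\star)^\top\psi)^{-1}$, checking that it is primal feasible and attains the dual value by complementary slackness.
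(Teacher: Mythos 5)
Your proposal follows essentially the same route as the paper's proof: eliminate the moment constraint via the conjugacy between $\delta_E$ and $\sigma_E$, swap the resulting $\sup$ over the relative-entropy ball with the $\inf$ over $z$ by Sion's minimax theorem, and then reduce the inner worst-case expectation over $\{\mathbb{Q}:\D{\mathbb{P}'}{\mathbb{Q}}\le r\}$ to the $\alpha$-formula. The only difference is that the paper handles this last step by directly invoking \cite[Proposition~5]{ref:vanParys:fromdata-17} for the modified loss $L(\theta,\xi)-z^\top\psi(\xi)$, whereas you re-derive that result from scratch; your derivation (Gibbs-type optimizer with density proportional to $(\alpha-h)^{-1}$ and the elementary minimization over $\lambda$) is consistent with the cited result.
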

 Proposition~\ref{prop:duality:DRO} implies that if~$L(\theta,\xi)$ is convex in~$\theta$ for every~$\xi$, then the DRO predictor~\eqref{eq:def:DRO:predictor} coincides with the optimal value of a finite-dimensional convex program. Note that the objective function of~\eqref{eq:DRO:predictor:duality:formula} can be evaluated cheaply whenever the support function of~$E$ is easy to compute and~$\mathbb P'$ has finite support (e.g., if~$\mathbb P'$ is set to an output~$\widehat{\mb Q}_k$ of Algorithm~\hyperlink{algo:1}{1}). In addition, the robust constraint in~\eqref{eq:DRO:predictor:duality:formula} can be expressed in terms of explicit convex constraints if $L$, $\Xi$ and~$\psi$ satisfy certain regularity conditions. A trivial condition is that~$\Xi$ is finite. More general conditions are described in~\cite{ref:Hertog-15}.

\section{Experimental results} \label{sec:numerical:experiments}
We now assess the empirical performance of the MDI-DRO method in our two running examples.\footnote{All simulations were implemented in MATLAB and run on a 4GHz CPU with 16Gb RAM. The Matlab code for reproducing the plots is available from \url{https://github.com/tobsutter/PMDI_DRO}.}

\textbf{Synthetic dataset --- covariate shift adaptation.} The first two experiments revolve around the logistic regression problem with a distribution shift described in Example~\ref{ex:LR:part:1}. Specifically, we consider a synthetic dataset where the test data is affected by a covariate shift, which constitutes a special case of a distribution shift. Detailed information about the data generation process is provided in Appendix~\ref{app:numerics}. Our numerical experiments reveal that the proposed MDI-DRO method significantly outperforms the naive ERM method in the sense that its out-of-sample risk has both a lower mean as well as a lower variance; see Figures~\ref{fig:ERM:r:_2} and~\ref{fig:ERM:r:_4}. We also compare MDI-DRO against the IWERM method, which accounts for the distribution shift by assigning importance weights ${p^\star(\cdot)}/{p(\cdot)}$ to the training samples, where $p^\star(\cdot)$ and $p(\cdot)$ denote the densities of the test distribution $\mathbb P^\star$ and training distribution $\mathbb{P}$, respectively. These importance weights are assumed to be known in IWERM. In contrast, MDI-DRO does {\em not} require any knowledge of the test distribution other than its membership in~$\Pi$. Nevertheless, MDI-DRO displays a similar out-of-sample performance as IWERM even though it has less information about~$\mathbb P^\star$, and it achieves a lower variance than IWERM; see Figures~\ref{fig:IWERM:r:_2}-\ref{fig:IWERM:r:_4}. Figure~\ref{fig:cons:r_4} shows how the reliability of the upper confidence bound $J^\star_N$ and the out-of-sample risk $R(\theta^\star_N, \mathbb P^\star)$ change with the regularization parameter~$r$. Additional results are reported in Figure~\ref{fig:classification:appendix:synthetic} in the appendix. These results confirm that small regularization parameters~$r$ lead to small out-of-sample risk and that increasing~$r$ improves the reliability of the upper confidence bound~$J^\star_N$.
\begin{figure}[h!]
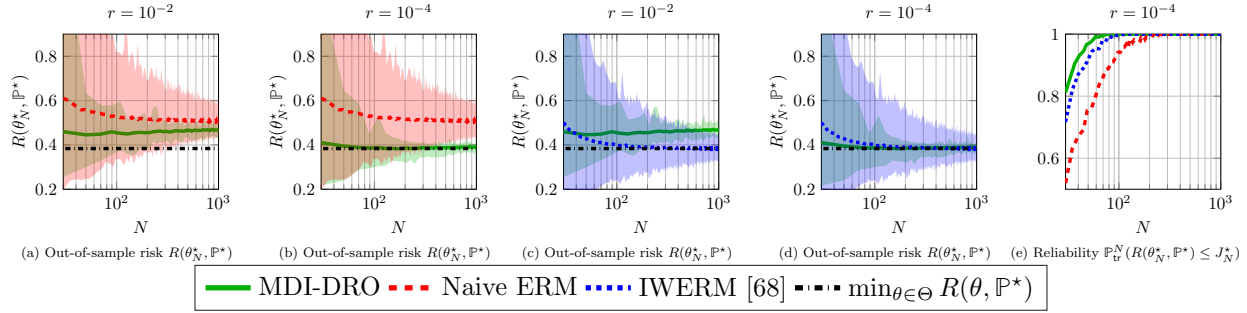
 
\centering
\scalebox{0.65}{
\subfloat[Out-of-sample risk $R(\theta^\star_N, \mathbb{P}^\star)$]{\input{oos_r_10_minus_2.tex} \label{fig:ERM:r:_2} }  
\hspace{2mm} 
\subfloat[Out-of-sample risk $R(\theta^\star_N, \mathbb{P}^\star)$]{\input{oos_r_10_minus_4.tex} \label{fig:ERM:r:_4} }
\newline
\subfloat[Out-of-sample risk $R(\theta^\star_N, \mathbb{P}^\star)$]{\input{oos_r_10_minus_2_part2.tex} \label{fig:IWERM:r:_2} }  
\hspace{2mm} 
\subfloat[Out-of-sample risk $R(\theta^\star_N, \mathbb{P}^\star)$]{\input{oos_r_10_minus_4_part2.tex} \label{fig:IWERM:r:_4} }
\subfloat[Reliability $\mathbb{P}^N_{\mathsf{tr}}( R(\theta^\star_N,\mathbb{P}^\star) \leq J^\star_N )$]{
%
%
\begin{tikzpicture}

\begin{axis}[%
width=1.25in,
height=1.25in,
at={(1.011111in,0.799444in)},
scale only axis,
xmode=log,
xmin=30,
xmax=1000,
xminorticks=true,
xlabel={$N$},
xmajorgrids,
xminorgrids,
ymin=0.5,
ymax=1,
ylabel={{\color{white}$\mathcal{R}(\widehat \beta_N, \mathbb{Q})$}},
y label style={yshift=-1.0em},
ymajorgrids,
title={$r=10^{-4}$},
legend to name=named,
legend style={legend cell align=left,align=left,draw=white!15!black,legend columns=4}
]

\addplot [color=darkgreen,solid,line width=2.0pt]
  table[row sep=crcr]{%
30	0.812\\
34	0.869\\
37	0.906\\
41	0.931\\
45	0.945\\
48	0.966\\
52	0.972\\
56	0.982\\
59	0.992\\
63	0.99\\
67	0.994\\
71	0.996\\
74	0.998\\
78	1\\
82	0.999\\
85	1\\
89	1\\
93	1\\
96	1\\
100	1\\
103	1\\
108	1\\
113	1\\
118	1\\
123	1\\
127	1\\
132	1\\
137	1\\
142	1\\
147	1\\
152	1\\
157	1\\
162	1\\
166	1\\
171	1\\
176	1\\
181	1\\
186	1\\
191	1\\
196	1\\
201	1\\
205	1\\
210	1\\
215	1\\
220	1\\
225	1\\
230	1\\
235	1\\
240	1\\
244	1\\
249	1\\
254	1\\
259	1\\
264	1\\
269	1\\
274	1\\
279	1\\
283	1\\
288	1\\
293	1\\
298	1\\
303	1\\
308	1\\
313	1\\
318	1\\
322	1\\
327	1\\
332	1\\
337	1\\
342	1\\
347	1\\
352	1\\
357	1\\
361	1\\
366	1\\
371	1\\
376	1\\
381	1\\
386	1\\
391	1\\
396	1\\
400	1\\
405	1\\
410	1\\
415	1\\
420	1\\
425	1\\
430	1\\
435	1\\
439	1\\
444	1\\
449	1\\
454	1\\
459	1\\
464	1\\
469	1\\
474	1\\
478	1\\
483	1\\
488	1\\
493	1\\
498	1\\
503	1\\
508	1\\
513	1\\
517	1\\
522	1\\
527	1\\
532	1\\
537	1\\
542	1\\
547	1\\
552	1\\
556	1\\
561	1\\
566	1\\
571	1\\
576	1\\
581	1\\
586	1\\
591	1\\
595	1\\
600	1\\
605	1\\
610	1\\
615	1\\
620	1\\
625	1\\
630	1\\
634	1\\
639	1\\
644	1\\
649	1\\
654	1\\
659	1\\
664	1\\
669	1\\
673	1\\
678	1\\
683	1\\
688	1\\
693	1\\
698	1\\
703	1\\
708	1\\
712	1\\
717	1\\
722	1\\
727	1\\
732	1\\
737	1\\
742	1\\
747	1\\
751	1\\
756	1\\
761	1\\
766	1\\
771	1\\
776	1\\
781	1\\
786	1\\
790	1\\
795	1\\
800	1\\
805	1\\
810	1\\
815	1\\
820	1\\
825	1\\
829	1\\
834	1\\
839	1\\
844	1\\
849	1\\
854	1\\
859	1\\
864	1\\
868	1\\
873	1\\
878	1\\
883	1\\
888	1\\
893	1\\
898	1\\
903	1\\
907	1\\
912	1\\
917	1\\
922	1\\
927	1\\
932	1\\
937	1\\
942	1\\
946	1\\
951	1\\
956	1\\
961	1\\
966	1\\
971	1\\
976	1\\
981	1\\
985	1\\
990	1\\
995	1\\
1000	1\\
};
\addlegendentry{MDI-DRO};


\addplot [color=red,dashed,line width=2.0pt]
  table[row sep=crcr]{%
30	0.52\\
35	0.632\\
40	0.671\\
45	0.692\\
49	0.754\\
54	0.762\\
59	0.799\\
64	0.837\\
69	0.855\\
74	0.885\\
79	0.889\\
84	0.91\\
88	0.911\\
93	0.925\\
98	0.941\\
103	0.941\\
108	0.958\\
113	0.958\\
118	0.967\\
123	0.955\\
127	0.979\\
132	0.986\\
137	0.974\\
142	0.982\\
147	0.98\\
152	0.984\\
157	0.983\\
162	0.979\\
166	0.989\\
171	0.987\\
176	0.994\\
181	0.984\\
186	0.991\\
191	0.99\\
196	0.995\\
201	0.996\\
205	0.998\\
210	0.997\\
215	0.995\\
220	1\\
225	0.993\\
230	0.997\\
235	0.998\\
240	0.998\\
244	0.996\\
249	0.999\\
254	1\\
259	0.999\\
264	0.998\\
269	1\\
274	1\\
279	0.999\\
283	0.999\\
288	1\\
293	1\\
298	1\\
303	0.999\\
308	1\\
313	1\\
318	1\\
322	0.999\\
327	1\\
332	1\\
337	1\\
342	1\\
347	1\\
352	0.999\\
357	1\\
361	1\\
366	1\\
371	1\\
376	1\\
381	1\\
386	1\\
391	1\\
396	1\\
400	1\\
405	1\\
410	1\\
415	1\\
420	1\\
425	1\\
430	1\\
435	1\\
439	1\\
444	1\\
449	1\\
454	1\\
459	1\\
464	1\\
469	1\\
474	1\\
478	1\\
483	1\\
488	1\\
493	1\\
498	1\\
503	1\\
508	1\\
513	1\\
517	1\\
522	1\\
527	1\\
532	1\\
537	1\\
542	1\\
547	1\\
552	1\\
556	1\\
561	1\\
566	1\\
571	1\\
576	1\\
581	1\\
586	1\\
591	1\\
595	1\\
600	1\\
605	1\\
610	1\\
615	1\\
620	1\\
625	1\\
630	1\\
634	1\\
639	1\\
644	1\\
649	1\\
654	1\\
659	1\\
664	1\\
669	1\\
673	1\\
678	1\\
683	1\\
688	1\\
693	1\\
698	1\\
703	1\\
708	1\\
712	1\\
717	1\\
722	1\\
727	1\\
732	1\\
737	1\\
742	1\\
747	1\\
751	1\\
756	1\\
761	1\\
766	1\\
771	1\\
776	1\\
781	1\\
786	1\\
790	1\\
795	1\\
800	1\\
805	1\\
810	1\\
815	1\\
820	1\\
825	1\\
829	1\\
834	1\\
839	1\\
844	1\\
849	1\\
854	1\\
859	1\\
864	1\\
868	1\\
873	1\\
878	1\\
883	1\\
888	1\\
893	1\\
898	1\\
903	1\\
907	1\\
912	1\\
917	1\\
922	1\\
927	1\\
932	1\\
937	1\\
942	1\\
946	1\\
951	1\\
956	1\\
961	1\\
966	1\\
971	1\\
976	1\\
981	1\\
985	1\\
990	1\\
995	1\\
1000	1\\
};
\addlegendentry{Naive ERM};


\addplot [color=blue,dotted,line width=2.0pt]
  table[row sep=crcr]{%
30	0.717\\
35	0.821\\
40	0.871\\
45	0.891\\
49	0.912\\
54	0.943\\
59	0.954\\
64	0.953\\
69	0.978\\
74	0.976\\
79	0.992\\
84	0.99\\
88	0.999\\
93	0.996\\
98	0.995\\
103	0.995\\
108	0.998\\
113	0.999\\
118	1\\
123	0.998\\
127	1\\
132	1\\
137	1\\
142	1\\
147	1\\
152	1\\
157	1\\
162	1\\
166	1\\
171	1\\
176	1\\
181	1\\
186	1\\
191	1\\
196	1\\
201	1\\
205	1\\
210	1\\
215	1\\
220	1\\
225	1\\
230	1\\
235	1\\
240	1\\
244	1\\
249	1\\
254	1\\
259	1\\
264	1\\
269	1\\
274	1\\
279	1\\
283	1\\
288	1\\
293	1\\
298	1\\
303	1\\
308	1\\
313	1\\
318	1\\
322	1\\
327	1\\
332	1\\
337	1\\
342	1\\
347	1\\
352	1\\
357	1\\
361	1\\
366	1\\
371	1\\
376	1\\
381	1\\
386	1\\
391	1\\
396	1\\
400	1\\
405	1\\
410	1\\
415	1\\
420	1\\
425	1\\
430	1\\
435	1\\
439	1\\
444	1\\
449	1\\
454	1\\
459	1\\
464	1\\
469	1\\
474	1\\
478	1\\
483	1\\
488	1\\
493	1\\
498	1\\
503	1\\
508	1\\
513	1\\
517	1\\
522	1\\
527	1\\
532	1\\
537	1\\
542	1\\
547	1\\
552	1\\
556	1\\
561	1\\
566	1\\
571	1\\
576	1\\
581	1\\
586	1\\
591	1\\
595	1\\
600	1\\
605	1\\
610	1\\
615	1\\
620	1\\
625	1\\
630	1\\
634	1\\
639	1\\
644	1\\
649	1\\
654	1\\
659	1\\
664	1\\
669	1\\
673	1\\
678	1\\
683	1\\
688	1\\
693	1\\
698	1\\
703	1\\
708	1\\
712	1\\
717	1\\
722	1\\
727	1\\
732	1\\
737	1\\
742	1\\
747	1\\
751	1\\
756	1\\
761	1\\
766	1\\
771	1\\
776	1\\
781	1\\
786	1\\
790	1\\
795	1\\
800	1\\
805	1\\
810	1\\
815	1\\
820	1\\
825	1\\
829	1\\
834	1\\
839	1\\
844	1\\
849	1\\
854	1\\
859	1\\
864	1\\
868	1\\
873	1\\
878	1\\
883	1\\
888	1\\
893	1\\
898	1\\
903	1\\
907	1\\
912	1\\
917	1\\
922	1\\
927	1\\
932	1\\
937	1\\
942	1\\
946	1\\
951	1\\
956	1\\
961	1\\
966	1\\
971	1\\
976	1\\
981	1\\
985	1\\
990	1\\
995	1\\
1000	1\\
};
\addlegendentry{IWERM \cite{JMLR:v8:sugiyama07a}};

\addplot [color=black, dashdotted,line width=2.0pt]
  table[row sep=crcr]{%
1	-1\\
1000	 -1\\
};
\addlegendentry{$\min_{\theta\in \Theta} R(\theta,\mathbb{P}^\star)$};

\end{axis}
\end{tikzpicture}%
\label{fig:cons:r_4}}}
\newline
\ref{named}
\caption[]{Results for a synthetic dataset with $m=6$. Shaded areas and lines represent ranges and mean values across 1000 independent experiments, respectively.}
\label{fig:toy:out-of-sample:high:D}
\end{figure}

\textbf{Real data --- classification under sample bias.} 
The second experiment addresses the heart disease classification task of Example~\ref{ex:LR:part:1} based on a real dataset\footnote{\url{https://www.kaggle.com/ronitf/heart-disease-uci}} consisting of~$N^\star$
i.i.d.\ samples from an unknown test distribution $\mathbb{P}^\star$.
To assess the effects of a distribution shift, 
we construct a biased training dataset $\{(\widehat x_1,\widehat y_1),\hdots,(\widehat x_N,\widehat y_N)\}$, $N< N^\star$, in which male patients older than 60 years are substantially over-represented. Specifically,  the~$N$ training samples are drawn randomly from the set of the 20\% oldest male patients. Thus, the training data follows a distribution $\mathbb{P}\neq \mb P^\star$. Even though the test distribution $\mathbb P^\star$ is unknown, we assume to know the empirical mean $m = \frac{1}{N^\star} \sum_{i=1}^{N^\star} (\widehat x_i,\widehat y_i)$ of the entire dataset to within an absolute error~$\Delta m>0$. The test distribution thus belongs to the set $\Pi$ defined in~\eqref{eq:Pi} with $E=[m - \Delta m \mathsf{1},m + \Delta m \mathsf{1}]$ and with~$\psi(x,y) = (x,y)$. 
We compare the proposed MDI-DRO method for classification against the naive ERM method that ignores the sample bias. In addition, we use a logistic regression model trained on the entire dataset as an (unachievable) ideal benchmark. Figure~2a shows the out-of-sample cost, Figure~2b the upper confidence bound~$J^\star_N$ and Figure~2c the misclassification rates of the different methods as the radius~$r$ of the ambiguity set is swept. Perhaps surprisingly, for some values of~$r$ the classification performance of MDI-DRO is comparable to that of the logistic regression method trained on the entire dataset.

\begin{figure}[h!] 
\begin{center}
   \includegraphics[width=0.85\textwidth]{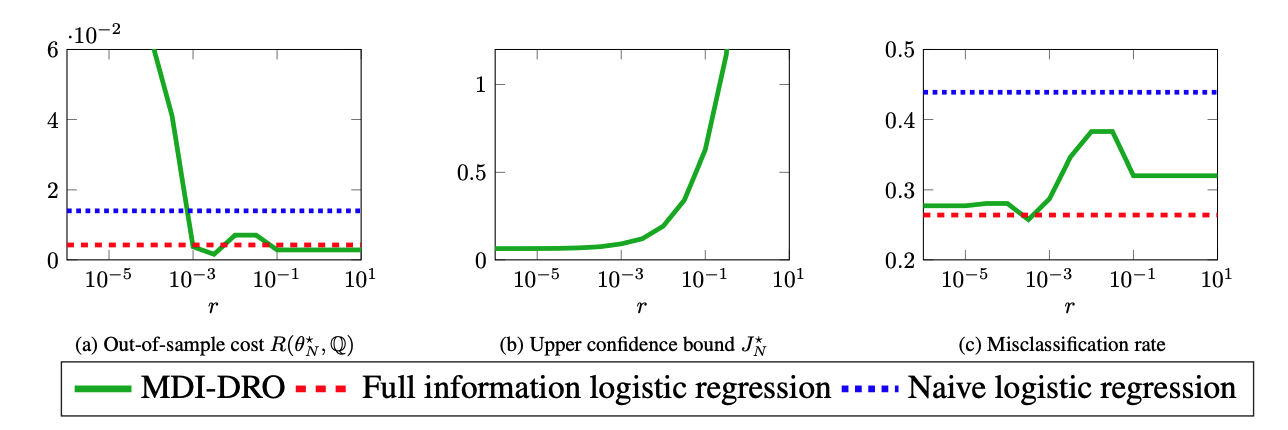}
   \end{center}
\caption[]{Heart disease classification example with $m=6$, $N=20$, $N^\star=303$ and $\Delta m=10^{-3}$.}
\label{fig:heart}
\end{figure}

\textbf{OPE for MDPs --- inventory control.}
We now consider the OPE problem of Examples~\ref{ex:OPE:part:1} and \ref{ex:OPE:part:2}.
A popular estimator for the cost $V_{\pi_{\mathsf{e}}}$ of the evaluation policy is the inverse propensity score (IPS)~\cite{ref:Rosenbaum-83} 
\begin{equation*} 
\textstyle{\widehat J_N^{\rm IPS} = \frac{1}{N}  \sum_{i=1}^N c(\widehat s_i,\widehat a_i)  \frac{\mu_\mathsf{e}(\widehat s_i,\widehat a_i)}{\mu_\mathsf{b}(\widehat s_i,\widehat a_i)}}.
\end{equation*}
Hoeffding's inequality then gives rise to the simple concentration bound
\begin{equation}\label{eq:hoeffding}
\textstyle{\mathbb{P}^N \left( V_{\pi_{\mathsf{e}}}\leq \widehat J_N^{\rm IPS} +\varepsilon \right) 
\geq 1 - e^{\frac{-2N\varepsilon^2}{b^2}}}\quad \forall \varepsilon>0,\;\forall N\in\mathbb{N},
\end{equation}
where $b=\max_{s\in\mathcal{S},a\in\mathcal{A}}c(s,a) \mu_\mathsf{e}(s,a)/\mu_\mathsf{b}(s,a)$. As~$b$ is typically a large constant, the finite sample bound~\eqref{eq:OPE:generalization} for~$J^\star_N$ is often more informative than \eqref{eq:hoeffding}. In addition, the variance of~$\widehat J_N^{\rm IPS}$ grows exponentially with the sample size~$N$~\cite{ref:Cortes-10,ref:Peters-13,ref:Strehl-10}. As a simple remedy, one can cap the importance weights beyond some threshold~$\beta>0$ and construct the modified IPS estimator as 
\begin{equation*}
\textstyle{\widehat J_N^{\mathop{\rm IPS}_\beta} = \frac{1}{N}  \sum_{i=1}^N  c(\widehat s_i, \widehat a_i) \min\left\{ \beta, \frac{\mu_\mathsf{e}(\widehat s_i,\widehat a_i)}{\mu_\mathsf{b}(\widehat s_i,\widehat a_i)} \right\}.}
\end{equation*}
Decreasing $\beta$ reduces the variance of $\widehat J_N^{\mathop{\rm IPS}_\beta}$ but increases its bias. An alternative estimator for~$V_{\pi_\mathsf{e}}$ is the doubly robust (DR) estimator $\widehat J_N^{\rm DR}$, which uses a control variate to reduce the variance of the IPS estimator. The DR estimator was first developed for contextual bandits \cite{ref:Dudik-14} and then generalized to MDPs~\cite{ref:Jiang-16, ref:Tang-20}. We evaluate the performance of the proposed MDI-DRO estimator on a classical inventory control problem. A detailed problem description is relegated to Appendix~\ref{app:numerics}. We sample both the evaluation policy $\pi_\mathsf{e}$ and the behavioral policy $\pi_\mathsf{b}$ from the uniform distribution on the space of stationary policies. The decision maker then has access to the evaluation policy $\pi_\mathsf{e}$ and to a sequence of i.i.d.~state action pairs $\{\widehat s_i,\widehat a_i\}_{i=1}^N$ sampled from~$\mu_\mathsf{b}$ as well as the observed empirical costs~$\{\widehat c_i\}_{i=1}^N$, where $\widehat c_i = c(\widehat s_i, \widehat a_i)$. 
Figure~\ref{fig:OPE:inventory} compares the proposed MDI-DRO estimator against the original and modified IPS estimators, the DR estimator and the ground truth expected cost of the evaluation policy. Figures~3a and~3b show that for small radii~$r$, the MDI-DRO estimator outperforms the IPS estimators both in terms of accuracy and precision. 
Figure~3c displays the disappointment probabilities $\mathbb{P}^N(V_{\pi_\mathsf{e}}>\widehat J_N)$ analyzed in Theorem~\ref{thm:admissibility}, where~$\widehat J_N$ denotes any of the tested estimators.

\begin{figure}[h!] 
\centering
\includegraphics[width=0.75\textwidth]{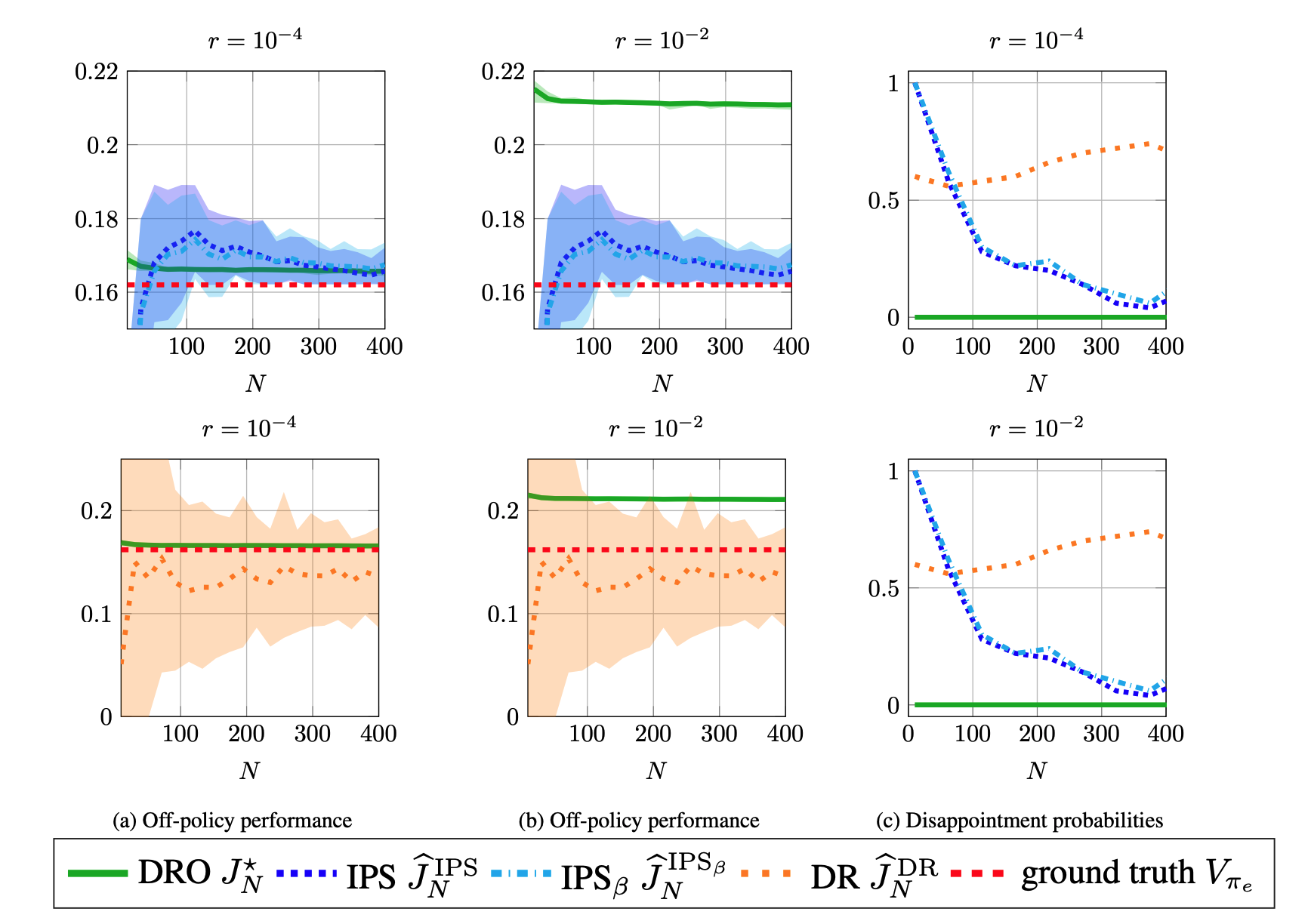}
\caption[]{Shaded areas and lines represent 90\% confidence intervals and mean values across 1000 independent experiments, respectively.}
\label{fig:OPE:inventory}
\end{figure}

\textbf{Acknowledgments.} This research was supported by the Swiss National Science Foundation under the NCCR Automation, grant agreement 51NF40\_180545.

\section{Appendix} \label{sec:appendix}
The appendix details all proofs and provides some auxiliary results grouped by section.
\subsection{Proofs of Section~\ref{sec:problem:statement}} \label{app:secc:proofs:problem:statement}

\begin{proof}[Proof of Proposition~\ref{prop:conditional:limit:thm}]
Denote by $\mathbb{P}^N_{\xi_1|\Pi}$ the probability distribution of $\xi_1$ with respect to~$\mathbb P^N$ conditional on the event $\widehat{\mathbb{P}}_N\in\Pi$. By \cite[Theorem~4]{ref:Csiszar-84}, we then have
 \begin{equation*}
\lim_{N\to\infty} \D{\mathbb{P}^N_{\xi_1|\Pi}}{\mathbb{P}^{f}}=0,
\end{equation*}
i.e., the conditional distribution $\mathbb{P}^N_{\xi_1|\Pi}$ converges in information to~$\mathbb{P}^{f}$. As the moment-generating function $\mathbb E_{\mathbb P^f}[ e^{t L(\theta,\xi)}]$ is finite for all $t$ in a neighborhood of~$0$,
\cite[Lemma~3.1]{csiszar1975} ensures that 
 \begin{equation*}
\lim_{N\to\infty} \mathbb{E}_{\mathbb{P}^N}[L(\theta,\xi_1)|\widehat{\mathbb{P}}_N\in \Pi] 
= \lim_{N\to\infty} \mathbb{E}_{\mathbb{P}^N_{\xi_1|\Pi}}[L(\theta,\xi_1)]
= \mathbb{E}_{\mathbb{P}^\star}[L(\theta,\xi_1)].
\end{equation*}
Thus, the claim follows.
\end{proof}

\begin{proof}[Proof of Proposition~\ref{prop:every:measure:I:projection}] Proposition~\ref{prop:every:measure:I:projection} can be seen as a generalization of \cite[Exercise~12.6]{cover2006elements}. To simplify notation, we define $\alpha = \D{\mathbb{Q}}{\mathbb{P}}$. Then, we have
\begin{subequations}
\begin{align}
 \min\limits_{\bar{\mathbb{Q}}\in\Pi}  \D{\bar{\mathbb{Q}}}{\mathbb{P}}
 &\label{eq:pf:max:ent:step1}=\min\limits_{\bar{\mathbb{Q}}\in\mathcal{P}(\Xi)} \sup_{\lambda\in\Re}  \D{\bar{\mathbb{Q}}}{\mathbb{P}} - \lambda \left( \int_\Xi \log\left(\frac{\d \mathbb{Q}}{\d \mathbb{P}}\right) \d\bar{\mathbb{Q}} - \alpha \right) \\
 &\label{eq:pf:max:ent:step2}=\max_{\lambda\in\Re} \min\limits_{\bar{\mathbb{Q}}\in\mathcal{P}(\Xi)}  \D{\bar{\mathbb{Q}}}{\mathbb{P}} - \lambda  \int_\Xi \log\left(\frac{\d\mathbb{Q}}{\d \mathbb{P}}\right) \d\bar{\mathbb{Q}} + \lambda \alpha \\
 &\label{eq:pf:max:ent:step3}=\max_{\lambda\in\Re} - \log \int_\Xi \left( \frac{\d\mathbb{Q}}{\d\mathbb{P}} \right)^\lambda \d\mathbb{P}+ \lambda \alpha =\alpha, 
\end{align}
\end{subequations}
where \eqref{eq:pf:max:ent:step1} holds by the definition of the set~$\Pi$, and~\eqref{eq:pf:max:ent:step2} follows from Sion's minimax theorem. The latter applies because the relative entropy $\D{\bar{\mathbb{Q}}}{\mathbb{P}}$ is convex in~$\bar{\mathbb{Q}}$ and the distribution family~$\mathcal{P}(\Xi)$ is convex and weakly compact thanks to the compactness of~$\Xi$. Finally, \eqref{eq:pf:max:ent:step3} holds because of \cite[Lemma~2]{ref:sutter-JMLR-19}, which implies that the inner minimization problem in~\eqref{eq:pf:max:ent:step2} is uniquely solved by the probability distribution~$\bar{\mathbb{Q}}^\star_\lambda\in \mathcal{P}(\Xi)$ defined through
\begin{equation*}
\bar{\mathbb{Q}}^\star_\lambda(B) = \frac{\int_B e^{\lambda \log \left( \frac{\d\mathbb{Q}}{\d \mathbb{P}} \right)}\d \mathbb{P}}{\int_\Xi e^{\lambda \log \left( \frac{\d\mathbb{Q}}{\d \mathbb{P}} \right)}\d \mathbb{P}}
=\frac{\int_B \left( \frac{\d\mathbb{Q}}{\d \mathbb{P}} \right)^\lambda \d \mathbb{P}}{\int_\Xi \left( \frac{\d\mathbb{Q}}{\d \mathbb{P}} \right)^\lambda \d \mathbb{P}} \quad \forall B\in\mathcal{B}(\Xi).
\end{equation*}
By inspecting the first-order optimality condition of the convex maximization problem in \eqref{eq:pf:max:ent:step3} and remembering that $\alpha = \D{\mathbb{Q}}{\mathbb{P}}$, one can then show that \eqref{eq:pf:max:ent:step3} is solved by $\lambda^\star=1$. The Nash equilibrium of the zero-sum game in~\eqref{eq:pf:max:ent:step2} is therefore given by~$\lambda^\star$ and its unique best response~$\bar{\mathbb{Q}}^\star_{\lambda^\star}=\mathbb Q$, and the solution~$f(\mathbb P)$ of the I-projection problem in~\eqref{eq:pf:max:ent:step1} coincides with~$\mathbb{Q}$.
\end{proof}

\subsection{Proofs of Section~\ref{sec:stat:guarantees}}

\begin{proof}[Proof of Theorem~\ref{thm:admissibility}]
The continuity of~$R^\star$ on~$\Theta\times\Pi$ is established in Corollary~\ref{corollary:dual:multiplier:bound} below.

In order to prove that the DRO predictor~$R^\star$ is also admissible, we first prove that the following inequality holds for any fixed $\theta\in\Theta$ and $\mathbb{P}\in\mathcal{P}(\Xi)$.
\begin{align} \label{eq:pf:admiss:claim1}
\limsup_{N\to\infty}\frac{1}{N}\log \mathbb{P}^N & \left( R(\theta,\mathbb{P}^f) > R^\star(\theta,\widehat{\mathbb{P}}^f_N) \right) \leq -r 
\end{align}
For the sake of concise notation, we then define the disappointment set 
\[
    \mc D(\theta,\mathbb{P})=\left\{\mathbb{P}'\in\mathcal{P}(\Xi) : R(\theta,f(\mathbb{P})) > R^\star(\theta,f(\mathbb{P}'))\right\}
\]
containing all realizations~$\mathbb P'$ of the empirical distribution~$\widehat {\mathbb P}_N$, for which the true risk~$R(\theta,f(\mathbb{P}))$ under the I-projection of the unknown true distribution exceeds the risk~$R^\star(\theta,f(\mathbb{P}'))$ predicted by the distributionally robust predictor under the I-projection of the empirical distribution. Hence, $\bar{\mc D}(\theta,\mathbb{P})$ contains all realizations of~$\widehat {\mathbb P}_N$ under which the distributionally robust predictor is too optimistic and thus leads to disappointment. Similarly, we define the weak disappointment set
\[
    \bar{\mc D} (\theta,\mathbb{P})=\left\{\mathbb{P}'\in\mathcal{P}(\Xi) : R(\theta,f(\mathbb{P})) \geq R^\star(\theta,f(\mathbb{P}'))\right\},
\]
which simply replaces the strict inequality in the definiton of~$\bar{\mc D} (\theta,\mathbb{P})$ with a weak inequality. 
Recall now that $R^\star$ is continuous. In addition, note that $f$ is continuous thanks to \cite[Theorem~9.17]{sundaram_1996}, which follows from the strict convexity of the relative entropy in its first argument \cite[Lemma~6.2.12]{dembo2009large}. Therefore the set $\bar{\mc D}(\theta,\mathbb{P})$ is closed, and $\cl  \mc D(\theta,\mathbb{P}) \subset \bar{\mc D}(\theta,\mathbb{P})$. The left hand side of~\eqref{eq:pf:admiss:claim1} thus satisfies
\begin{align*}
\limsup_{N\to\infty}\frac{1}{N}\log \mathbb{P}^N \left( R(\theta,f(\mathbb{P})) > R^\star(\theta,f(\widehat {\mathbb{P}}_N)) \right)
&=\limsup_{N\to\infty}\frac{1}{N}\log \mathbb{P}^N\left( \widehat{\mathbb{P}}_N \in \mc D(\theta,\mathbb{P}) \right)\\
&\leq - \inf_{\mathbb{P}'\in\cl \mc D(\theta,\mathbb{P})} \D{\mathbb{P}'}{\mathbb{P}} \\
&\leq - \inf_{\mathbb{P}'\in \bar{\mc D} (\theta,\mathbb{P})} \D{\mathbb{P}'}{\mathbb{P}} \\
&\leq -r,
\end{align*}
where the first inequality follows from Sanov's Theorem, which asserts that $\widehat{\mathbb{P}}_N$ satisfies a large deviation principle with the relative entropy as the rate function \cite[Theorem~6.2.10]{dembo2009large}.
The second inquality exploits the inclusion $\cl  \mc D(\theta,\mathbb{P}) \subset \bar{\mc D}(\theta,\mathbb{P})$, and the last inequality holds because
\begin{equation*}
\mathbb{P}'\in \bar{\mc D}(\theta,\mathbb{P})\quad\implies\quad  \D{f(\mathbb{P}')}{f(\mathbb{P})} \geq r
\quad \implies \quad \D{\mathbb{P}'}{\mathbb{P}} \geq r,
\end{equation*}
where the first implication has been established in te proof of \cite[Theorem~10]{ref:vanParys:fromdata-17}, and the second implication follows from the data-processing inequality \cite[Lemma~3.11]{ref:Csiszar_Koerner-82}. This proves \eqref{eq:pf:admiss:claim1}.

In the last step of the proof, we fix an arbitrary~$\varepsilon>0$ and show that 
\begin{align*}
\limsup_{N\to\infty}\frac{1}{N}\log \mathbb{P} & \left( R(\theta^\star_N,\mathbb{P}^f) > R^\star(\theta^\star_N,\widehat{\mathbb{P}}^f_N)+\varepsilon \right) \leq -r
\end{align*}
for any~$\mathbb P\in\mathcal P(\Xi)$, where $\theta^\star_N$ is defined as usual as a minimizer of~\eqref{eq:def:DRO:general}. The proof of this generalized statement widely parallels that of \cite[Theorem~11]{ref:vanParys:fromdata-17} and exploits the data processing inequality in a similar manner as in the proof of \eqref{eq:pf:admiss:claim1}. Details are omitted for brevity.
\end{proof}

\begin{proof}[Proof of Theorem~\ref{thm:statistical:efficiency}]
The proof is inspired by \cite[Theorems~7 \&~11]{ref:vanParys:fromdata-17}. We first show that any continuous admissible data-driven predictor $\widehat R$ satisfies the inequality
\begin{equation}
    \label{eq:optimality-predictor}
    \lim_{N\to\infty} R^\star(\theta,f(\widehat{\mb P}_N)) \leq \lim_{N\to\infty} \widehat R(\theta,f(\widehat{\mb P}_N)) \quad \mb P^\infty\text{-a.s.}
\end{equation}
for all $\theta\in\Theta$ and $\mathbb{P}\in\mathcal{P}(\Xi)$. As the empirical distribution~$\widehat{\mathbb P}_N$ converges weakly to~$\mathbb P$ and as $R^\star$, $\widehat R$ and~$f$ represent continuous mappings, the inequality \eqref{eq:optimality-predictor} is equivalent to
\begin{equation*}
      R^\star(\theta,f(\mb P)) \leq \widehat R(\theta,f(\mb P))
\end{equation*}
for all $\theta\in\Theta$ and $\mathbb{P}\in\mathcal{P}(\Xi)$. Suppose now for the sake of contradiction there exists a continuous admissible predictor $\widehat R$,
a parameter $\theta_0\in \Theta$ and an asymptotic estimator realization $\mb P'_0\in\mc P(\Xi)$ with
\begin{equation*}\label{eq:proof:thm:contrad:optimality}
\widehat R(\theta_0,f(\mb P'_0)) < R^\star(\theta_0,f(\mb P'_0)) .
\end{equation*}
In fact, as~$\widehat R$, $R^\star$ and~$f$ are continuous functions, this strict inequality holds on a neighborhood of~$\mathbb P_0'$. 
Next, define $\varepsilon = R^\star(\theta_0,f(\mb P'_0)) - \widehat R(\theta_0,f(\mb P'_0))>0$ and denote by $\bar{\mb P}\in\Pi$ an optimizer of the worst-case risk evaluation problem~\eqref{eq:def:DRO:predictor} for~$\mathbb P'=f(\mathbb P'_0)$, which satisfies~$R^\star(\theta_0,f(\mb P'_0)) = R(\theta_0,\bar{\mb P})$ and~$\D{f(\mb P'_0)}{\bar{\mb P}}\leq r$. By using a continuity argument as in the proof of~\cite[Theorem~10]{ref:vanParys:fromdata-17} and by exploiting the convexity of~$\Pi$, one can then show that there exists a model~$\mb P_0\in\Pi$ with
\begin{equation} \label{eq:proof:prop:rm:point}
R(\theta_0,\bar{\mb P})<R(\theta,\mb P_0)+\varepsilon \quad \text{and}\quad \D{f(\mb P'_0)}{\mb P_0}=r_0<r.
\end{equation}
All of this implies that
\begin{equation}\label{eq:proof:prop:rm:point:2}
\widehat R(\theta_0,f(\mb P'_0))  = R^\star(\theta_0,f(\mb P'_0)) - \varepsilon = R(\theta_0,\bar{\mb P}) - \varepsilon <R(\theta_0,\mb P_0)=R(\theta_0,f(\mb P_0)),
\end{equation}
where the three equalities follow from the definition of~$\varepsilon$, the construction of~$\bar{\mathbb P}$ and the observation that~$f$ reduces to the identity mapping when restricted to~$\Pi$. The inequality holds due to the first relation in~\eqref{eq:proof:prop:rm:point}. In analogy to the proof of Theorem~\ref{thm:admissibility}, we now introduce the  disappointment set for the data-driven predictor~$\widehat R$ under the data-generating distribution~$\mb P_0$, that is,
\begin{equation*}
\mathcal{D}(\theta_0,\mb P_0)=\left\{\mathbb{P}'\in\mathcal{P}(\Xi) : R(\theta_0,f(\mb P_0))>\widehat R(\theta_0,f(\mathbb{P}')) \right\}.
\end{equation*}
The relation \eqref{eq:proof:prop:rm:point:2} readily implies that $\mathbb{P}'_0\in\mathcal{D}(\theta_0,\mb P_0)$. As the I-projection is idempotent (that is, $f\circ f=f$), one can further verify that~$f(\mb P'_0)\in\mathcal{D}(\theta_0,\mb P_0)$. 
Denoting the empirical distribution of~$N$ training samples drawn independently from~$\mb P_0$ by $\widehat{\mb P}_{0,N}$, we thus find
\begin{equation*}\label{eq:optimality:contradiction}
\begin{aligned}
\liminf_{N\to\infty}\frac{1}{N} \log \mb P_0^N\left(R(\theta_0,f(\mb P_0)) >\widehat R (\theta_0,f(\widehat{\mb P}_{0,N}))\right) 
& = \liminf_{N\to\infty}\frac{1}{N} \log \mb P_0^N\left(\widehat{\mb P}_{0,N}\in \mathcal{D}(\theta_0,\mb P_0)\right) \\
& \geq -\inf_{\mathbb{P}'\in \interior \mathcal{D}(\theta_0,\mb P_0)} \D{\mathbb{P}'}{\mb P_0}\\
& = -\inf_{\mathbb{P}'\in \mathcal{D}(\theta_0,\mb P_0)} \D{\mathbb{P}'}{\mb P_0}\\
& \geq - \D{f(\mb P'_0)}{\mb P_0} \\
& = -r_0 > -r,
\end{aligned}
\end{equation*}
where the first inequality follows from Sanov's Theorem, which ensures that $\widehat{\mathbb{P}}_N$ satisfies a large deviation principle with the relative entropy as the rate function. The second equality holds because~$\mathcal{D}(\theta_0,\mb P_0')$ is open thanks to the continuity of~$\widehat R$ and~$f$, and the second inequality exploits our earlier insight that $f(\mathbb{P}'_0)\in\mathcal{D}(\theta_0,\mb P_0')$. The last inequality, finally, follows from the second relation in~\eqref{eq:proof:prop:rm:point}. The above reasoning shows that~$\widehat R$ fails to be admissible, and hence a data-driven predictor~$\widehat R$ with the advertised properties cannot exist. Thus, $R^\star$ indeed satisfies the efficiency property~\eqref{eq:optimality-predictor}.

To show that $\lim_{N\rightarrow\infty} J^\star_N\leq \lim_{N\rightarrow\infty} \widehat J_N$ $\mathbb P^\infty$-almost surely for all~$\mathbb P\in\mathcal P(\Xi)$, we use~\eqref{eq:optimality-predictor} and adapt the proof of \cite[Theorem~11]{ref:vanParys:fromdata-17} with obvious modifications. Details are omitted for brevity.
\end{proof}
\begin{proof}[Proof of Corollary~\ref{cor:finite-sample-guarantee}]
Recalling that Sanov's Theorem for finite state spaces offers finite sample bounds \cite[Theorem~11.4.1]{cover2006elements}, the claim can be established by repeating the proof of Theorem~\ref{thm:admissibility}.
\end{proof}

\begin{proof}[Proof of Theorem~\ref{thm:asymptotic:consistency}]

It suffices to prove~\eqref{eq:thm:consistency-assertion2} because~\eqref{eq:thm:consistency-assertion1} can be seen as a special case of~\eqref{eq:thm:consistency-assertion2} when~$\Theta=\{\theta\}$. In the remainder we denote by $\mathsf{d}_{\mathsf{TV}}(\mb P,\mb Q)$ the total variation distance and by $\mathsf{d}_{\mathsf{W_p}}(\mb P,\mb Q)$ the $p$-th Wasserstein distance ($p\in\mathbb{N}$) between two probability distributions $\mb P,\mb Q\in\mathcal P(\Xi)$. 
To make its dependence on the radius~$r$ explicit, throughout this proof we temporarily use~$R^\star_r$ to denote the DRO predictor~\eqref{eq:def:DRO:predictor}. As usual, we use~$\theta^\star_N\in\Theta$ to denote a minimizer of the DRO problem~\eqref{eq:def:DRO:general} with~$\mathbb P'=\widehat{\mathbb P}_N^f$. In addition, we use~$\widehat{\mathbb{Q}}^\star_{N,\theta}\in\Pi$ to denote a maximizer of the worst-case risk evaluation problem~\eqref{eq:def:DRO:predictor} with~$\mathbb P'=\widehat{\mathbb P}_N^f$. By definition, this maximizer must satisfy the relations
\begin{equation*}
    R^\star_{r_N} (\theta,\widehat{\mathbb{P}}_N^f)=R(\theta,\widehat{\mathbb{Q}}^\star_{N,\theta})\quad \text{and} \quad \D{\widehat{\mathbb{P}}^f_{N}}{\widehat{\mathbb{Q}}^\star_{N,\theta}}\leqslant r_N
\end{equation*}
for all~$\theta\in\Theta$ and~$N\in\mathbb N$. Pinsker's inequality then implies that
\begin{equation}\label{eq:Pinsker:ineq}
  \sup_{\theta\in \Theta} \mathsf{d}_{\mathsf{TV}}\left(\widehat{\mathbb{P}}^f_{N}, \widehat{\mathbb{Q}}^\star_{N,\theta} \right)
  \leq \sup_{\theta\in \Theta} \sqrt{\frac{1}{2}\D{\widehat{\mathbb{P}}^f_{N}}{\widehat{\mathbb{Q}}^\star_{N,\theta}}}
  \leq \sqrt{\frac{r_N}{2}} \quad \forall N\in\mb N.
\end{equation}
Thus, we find
\begin{align*}
    &\sup_{\theta\in\Theta} \left\{ \left| R^\star_{r_N} (\theta,\widehat{\mathbb{P}}_N^f) - R(\theta,\mathbb{P}^f)\right| \right\} \\
    &\qquad = \sup_{\theta\in\Theta}\left\{\left| \mathbb{E}_{\widehat{\mathbb{Q}}^\star_{N,\theta}}[L(\theta,\xi)] - \mathbb{E}_{\mathbb{P}^f}[L(\theta,\xi)]  \right|\right\} \\
    &\qquad \leq \sup_{\theta\in\Theta} \left\{ \left| \mathbb{E}_{\widehat{\mathbb{Q}}^\star_{N,\theta}}[L(\theta,\xi)] - \mathbb{E}_{\widehat{\mathbb{P}}_N^f}[L(\theta,\xi)]  \right| 
    +\left| \mathbb{E}_{\widehat{\mathbb{P}}_N^f}[L(\theta,\xi)] - \mathbb{E}_{\mathbb{P}^f}[L(\theta,\xi)]  \right|\right\}\\
    &\qquad \leq \sup_{\theta\in\Theta} \left\{ \left| \mathbb{E}_{\widehat{\mathbb{Q}}^\star_{N,\theta}}[L(\theta,\xi)] - \mathbb{E}_{\widehat{\mathbb{P}}_N^f}[L(\theta,\xi)]  \right| \right\}
    +\sup_{\theta\in\Theta} \left\{\left| \mathbb{E}_{\widehat{\mathbb{P}}_N^f}[L(\theta,\xi)] - \mathbb{E}_{\mathbb{P}^f}[L(\theta,\xi)]  \right|\right\}\\
    &\qquad \leq \Lambda \sup_{\theta\in\Theta}  \mathsf{d}_{\mathsf{W}_1}\left( \widehat{\mathbb{Q}}^\star_{N,\theta},\widehat{\mathbb{P}}_N^f\right)
    + \Lambda   \mathsf{d}_{\mathsf{W}_1}\left(\widehat{\mathbb{P}}_N^f,\mathbb{P}^f\right)\\
    &\qquad \leq \Lambda C \sup_{\theta\in\Theta}  \mathsf{d}_{\mathsf{TV}}\left( \widehat{\mathbb{Q}}^\star_{N,\theta},\widehat{\mathbb{P}}_N^f\right)
    + \Lambda  \mathsf{d}_{\mathsf{W}_2}\left(\widehat{\mathbb{P}}_N^f,\mathbb{P}^f\right),
\end{align*}
where the first three inequalities follow from the triangle inequality, the subadditivity of the supremum operator and the Kantorovich-Rubinstein theorem \cite[Theorem~5.10]{ref:Villani-08}, respectively. The last inequality holds because~$\Xi$ is compact, which implies that the first Wasserstein distance can be bounded above by the total variation distance scaled with a positive constant~$C$ \cite[Theorem~6.15]{ref:Villani-08} and because $\mathsf{d}_{\mathsf{W}_1}(\cdot,\cdot) \leq \mathsf{d}_{\mathsf{W}_2}(\cdot,\cdot)$ thanks to Jensen's inequality. By~\eqref{eq:Pinsker:ineq}, the first term in the above expression decays deterministically to zero as~$N$ grows. 
The second term converges $\mathbb{P}^\infty$-almost surely to zero as~$N$ increases because the empirical distribution converges $\mathbb{P}^\infty$-almost surely to the data-generating distribution in the second Wasserstein distance \cite{HOROWITZ1994261}. In summary, we thus have \begin{equation}\label{eq:uniform:conv:Q}
    \lim_{N\to\infty}\sup_{\theta\in\Theta} \left| R^\star_{r_N} (\theta,\widehat{\mathbb{P}}_N^f) - R(\theta,\mathbb{P}^f)\right| = 0 \quad \mathbb{P}^\infty\text{-a.s.}
\end{equation}
Put differently, for $\mathbb P^\infty$-almost every trajectory of training samples, the functions~$R^\star_{r_N} (\cdot,\widehat{\mathbb{P}}_N^f)$ converge uniformly to~$R(\cdot,\mathbb{P}^f)$. The claim then follows from~\cite[Proposition~7.15 and Theorem~7.31]{rockafellar1998variational}.

\end{proof}

\subsection{Proofs and auxiliary results for Section~\ref{sec:computation}} 

\begin{proof}[Proof of Theorem~\ref{thm:main:result:inf:dim}]
The key enabling mechanism to prove~\eqref{eq:thm:primal:optimality:cts} and \eqref{eq:thm:primal:feasibility:cts} is the so-called double smoothing method for linearly constrained convex programs \cite{ref:devolder-12}. Our proof parallels that of \cite[ Theorem~5]{ref:sutter-JMLR-19} and is provided here to keep the paper self contained. 
Throughout the proof, we denote by $\mc M(\Xi)$ the vector space of all finite signed Borel measures on~$\Xi$, and we equip~$\mc M(\Xi)$ with the total variation norm~$\|\cdot\|_{\mathsf{TV}}$. Choosing the total variation norm has the benefit that the  function~$g:\mc P(\Xi)\rightarrow\Re_+$ defined through $g(\mb Q) = \D{\mb Q}{\widehat{\mb P}_N}$ is strongly convex with convexity parameter~$1$. Indeed, Pinsker's inequality implies that~$d(\mb Q) \geq \frac{1}{2}\|\mb Q - \widehat{\mb P}_N\|_{\mathsf{TV}}^2$ for all $\mb Q\in\mc P(\Xi)$. To prove~\eqref{eq:thm:primal:optimality:cts} and~\eqref{eq:thm:primal:feasibility:cts}, we consider the primal and dual optimization problems
\begin{subequations}\label{eq:primal:dual:pairs}
\begin{align}
J_{\mathsf{P}}^{\star} &= \min\limits_{\mb Q\in\mathcal{P}(\Xi)} \Big \{  \D{\mb Q}{\widehat{\mathbb{P}}_N} + \sup_{z\in\Re^{d}}\left\{ \mb E_{\mb Q}[\psi(\xi)]^\top z- \sigma_{E}(z)\right\} \Big \} \label{eq:primal:problem}\\
J_{\mathsf{D}}^{\star} &= \sup_{z\in\Re^{d}} \Big \{ - \sigma_{E}(z)  +   \min\limits_{\mb Q\in\mathcal{P}(\Xi)} \left\{ \D{\mb Q}{\widehat{\mathbb{P}}_N}  + \mb E_{\mb Q}[\psi(\xi)]^\top z\right\} \Big \} \, ,  \label{eq:dual:problem}
\end{align}
\end{subequations}
where $\sigma_E:\Re^{d}\to \mathbb R$ defined through $\sigma_{E}(z)=\max_{x\in E}z^\top x$ denotes the support function of~$E$. As the convex conjugate of the support function~$\sigma_E$ is the indicator function~$\delta_E:\Re^{d}\to [0,\infty]$ defined through~$\delta_E(x)=0$ if~$x\in E$ and $\delta_E(x)=\infty$ if~$x\notin E$, the optimal value of the maximization problem over~$z$ in~\eqref{eq:primal:problem} equals~$\delta_E(\mb E_{\mb Q}[\psi(\xi)])$. Hence, the unique minimizer of~\eqref{eq:primal:problem} coincides with the I-projection of the empirical distribution onto the set~$\Pi$. We also remark that~$\sigma_E$ is continuous because~$E$ is non-empty and compact \cite[Corollary~13.2.2]{ref:Rockafellar-97}. Assumption~\ref{ass:slater} then ensures via~\cite[Lemma~3]{ref:sutter-JMLR-19} that there is no duality gap, i.e, $J_{\mathsf{P}}^{\star}=J_{\mathsf{D}}^{\star}$. Next, we introduce the shorthand
\begin{equation*} 
F(z)= - \sigma_{E}(z)  +   \min\limits_{\mb Q\in\mathcal{P}(\Xi)} \left\{ \D{\mb Q}{\widehat{\mathbb{P}}_N}  + \mb E_{\mb Q}[\psi(\xi)]^\top z\right\}
\end{equation*}
for the dual objective function. While the primal problem \eqref{eq:primal:problem} is an infinite-dimensional optimization problem, the dual problem \eqref{eq:dual:problem} can be solved via first-order methods provided that the gradient of the dual objective function~$F$ can be evaluated at low cost. Unfortunately, this function fails to be smooth. Consequently, an optimal first-order method would require $O(1/\varepsilon^2)$ iterations, where~$\varepsilon$ denotes the desired additive accuracy \cite[Section~3.2]{ref:Nesterov:book:14}. However, the computation can be accelerated by smoothing the dual objective function as in~\cite{ref:devolder-12, nesterov05} and by exploiting structural properties. To this end, we introduce a smoothed version~$F_\eta$ of the dual objective function defined through 
\begin{equation*} 
F_{\eta}(z)=  -\max_{x\in E}\left\{ x^\top z - \frac{\eta_{1}}{2}\norm{x}_{2}^{2} \right\} +  \min\limits_{\mb Q\in\mathcal{P}(\Xi)} \left\{ \D{\mb Q}{\widehat{\mathbb{P}}_N} + \mb E_{\mb Q}[\psi(\xi)]^\top z \right\}-\frac{\eta_{2}}{2}\norm{z}_{2}^{2} \, ,
\end{equation*}
where $\eta=(\eta_{1},\eta_{2})\in\Re_{++}^{2}$ is a smoothing parameter. One readily verifies that~$x^\star_z=\pi_E(\eta_1^{-1}z)$ solves the optimization problem in the first term. The optimization problem in the second term minimizes the sum of a relative entropy function and a linear function. Therefore, it is reminiscent of an entropy maximization problem, and one can show that it is solved by the Gibbs distribution
\begin{equation*}
    \mb Q_z^\star =\frac{\sum_{j=1}^N\exp\left(-z^\top \psi\,(\widehat\xi_j)\right) \delta_{\widehat\xi_j}}{\sum_{j=1}^N\exp\left(-z^\top \,\psi(\widehat\xi_j)\right)},
\end{equation*}
see \cite[Lemma~2]{ref:sutter-JMLR-19}. By construction, the smoothed dual objective function~$F_\eta$ is $\eta_2$-strongly concave and differentiable. Its gradient can be expressed in terms of the parametric optimizers~$x^\star_z$ and~$\mathbb Q_z^\star$ as
\begin{equation*}
\nabla F_\eta(z) = -x^\star_z +\mb E_{\mb Q_z^\star}[\psi(\xi)] - \eta_2 z = G_\eta(z),
\end{equation*}
where $G_\eta$ is defined in~\eqref{eq:def:gradient:FGA}; see also~\cite[Theorem~1]{nesterov05}. In addition, as shown in \cite[Theorem~1]{nesterov05}, the gradient function~$G_\eta$ is Lipschitz continuous with a Lipschitz constant~$L_\eta$ that satisfies
\begin{align*}
    L_\eta &= 1/\eta_1 + \eta_2 + 
    \left( \sup_{\lambda\in\mathbb{R}^d, \mb Q\in\mc M(\Xi)} \left\{ \lambda^\top \mb E_{\mb Q}[\psi(\xi)]\ : \ \|\lambda\|_2=1, \|\mb Q\|_{\mathsf{TV}}=1 \right\} \right)^2 \\
    &\leq 1/\eta_1 + \eta_2 + \left( \sup_{\lambda\in\mathbb{R}^d, \mb Q\in\mc M(\Xi)} \left\{ \|\lambda\|_2 \|\mb E_{\mb Q}[\psi(\xi)]\|_2 \ : \ \|\lambda\|_2=1, \|\mb Q\|_{\mathsf{TV}}=1 \right\} \right)^2 \\
    &= 1/\eta_1 + \eta_2 +(
    \max_{\xi\in\Xi}\|\psi(\xi)\|_\infty)^2<\infty.
\end{align*}
Therefore, the smoothed dual optimization problem
\begin{equation}\label{eq:regularized:dual}
\sup_{z\in\Re^d} F_\eta(z)
\end{equation}
has a smooth and strongly concave objective function, implying that it can be solved highly efficiently via fast gradient methods. When solving \eqref{eq:regularized:dual} by Algorithm~\hyperlink{algo:1}{1}, we can use its outputs~$z_k$ to construct candidate solutions~$\widehat{\mb Q}_{k}$ for the primal (non-regularized) problem~\eqref{eq:primal:problem} as described in~\eqref{eq:estimates:primal:dual}. These candidate solutions satisfy the optimality and feasibility guarantees~\eqref{eq:thm:primal:optimality:cts} and~\eqref{eq:thm:primal:feasibility:cts}, which can be derived by using the techniques developed in~\cite{ref:devolder-12}. A detailed derivation using our notation is also provided in~\cite[Appendix~A]{ref:sutter-JMLR-19}. We highlight that~\eqref{eq:thm:primal:optimality:cts} and~\eqref{eq:thm:primal:feasibility:cts} critically rely on Assumption~\ref{ass:slater}, which implies via \cite[Lemma 1]{ref:Nedic-08} that the norm of the unique maximizer of the regularized dual problem \eqref{eq:regularized:dual} is bounded above by~$C/\delta$, where $C$ and $\delta$ are defined as in \eqref{eq:definitions:algo:cont}.
\end{proof}

\begin{proof}[Proof of Proposition~\ref{prop:duality:DRO}]
By the definition of the DRO predictor~$R^\star$ in \eqref{eq:def:DRO:predictor}, we have
\begin{align*}
    R^\star(\theta,\mathbb{P}')
    &= \sup_{\mb Q\in\mc P(\Xi)}\left\{ \mb E_{\mb Q}[L(\theta,\xi)] \ : \ \D{\mb P'}{\mb Q}\leq r, ~\mb E_{\mb Q}[\psi(\xi)]\in E \right\} \\
    &= \sup_{\mb Q\in\mc P(\Xi)}\left\{ \mb E_{\mb Q}[L(\theta,\xi)] - \sup_{z\in\mb R^d}\{ z^\top \mb E_{\mb Q}[\psi(\xi)] - \sigma_E(z) \} \ : \ \D{\mb P'}{\mb Q}\leq r \right\} \\
    &=\inf_{z\in\mb R^d} \sup_{\mb Q\in\mc P(\Xi)} \left\{ \mb E_{\mb Q}[L(\theta,\xi)-z^\top \psi(\xi)] + \sigma_E(z) \ : \ \D{\mb P'}{\mb Q}\leq r \right\} \\
    &= \displaystyle \inf_{z\in\mathbb{R}^d}\left\{ \begin{array}{cl}
   \inf\limits_{\alpha\in\mathbb{R}}  &  \alpha + \sigma_E(z) - e^{-r}\exp\left( \mathbb{E}_{\mathbb{P}'}[\log (\alpha - L(\theta,\xi)+z^\top \psi(\xi))]\right) \\
     \st & \alpha \geq \max_{\xi\in\Xi} L(\theta,\xi) - z^\top \psi(\xi) 
\end{array}\right.
\end{align*}
where the second equality holds because the convex conjugate of the support function~$\sigma_E$ is the indicator function~$\delta_E:\Re^{d}\to [0,\infty]$ defined through~$\delta_E(x)=0$ if~$x\in E$ and $\delta_E(x)=\infty$ if~$x\notin E$,
and the third equality follows from Sion's minimax theorem, which applies because the relative entropy $\D{\bar{\mathbb{Q}}}{\mathbb{P}}$ is convex in~$\bar{\mathbb{Q}}$, while the distribution family~$\mathcal{P}(\Xi)$ is convex and weakly compact. Finally, the fourth equality follows from \cite[Proposition~5]{ref:vanParys:fromdata-17},  which applies because~$r>0$ and because the modified loss function~$L(\theta,\xi)-z^\top \psi(\xi)$ is continuous in~$\xi$ for any fixed~$\theta$ and~$z$. The last expression ins manifestly equivalent to~\eqref{eq:DRO:predictor:duality:formula}, and thus the claim follows.
\end{proof}

The following corollary of Proposition~\ref{prop:duality:DRO} establishes that the DRO predictor~$R^\star$ is continuous. This result is relevant for Theorem~\ref{thm:admissibility}.


\begin{corollary}[Continuity of~$R^\star$]\label{corollary:dual:multiplier:bound}
If $r>0$, $0\in\interior(E)$ and for every~$z\in\mathbb R^d$ there exists~$\xi\in\Xi$ such that~$z^\top \psi(\xi)>0$, then the DRO predictor~$R^\star$ is continuous on~$\Theta\times\Pi$.

\end{corollary}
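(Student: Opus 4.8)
The plan is to deduce continuity of $R^\star$ from its dual representation in Proposition~\ref{prop:duality:DRO} via Berge's maximum theorem, after first confining the dual minimizers to a fixed compact set and then regularizing the constraint so as to stay away from the boundary of the domain of the logarithm, where the dual objective is genuinely discontinuous.

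Write $g(\alpha,z,\theta,\mathbb P')$ for the dual objective in \eqref{eq:DRO:predictor:duality:formula}, put $Y=\alpha-L(\theta,\xi)+z^\top\psi(\xi)$, and set $m(\theta,z)=\max_{\xi\in\Xi}(L(\theta,\xi)-z^\top\psi(\xi))$, so the dual feasible set is $\{(\alpha,z):\alpha\ge m(\theta,z)\}$. First I would establish two coercivity estimates that are uniform in $(\theta,\mathbb P')\in\Theta\times\Pi$. Using $\exp(\mathbb E_{\mathbb P'}[\log Y])\le\mathbb E_{\mathbb P'}[Y]$ (Jensen) and the fact that $\mathbb E_{\mathbb P'}[\psi(\xi)]\in E$ for $\mathbb P'\in\Pi$ gives $z^\top\mathbb E_{\mathbb P'}[\psi(\xi)]\le\sigma_E(z)$, one gets $g\ge(1-e^{-r})(\alpha+\sigma_E(z))+e^{-r}\mathbb E_{\mathbb P'}[L(\theta,\xi)]$. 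Here $1-e^{-r}>0$ because $r>0$; $\sigma_E(z)\ge\rho\|z\|$ for some $\rho>0$ because $0\in\interior(E)$; $\mathbb E_{\mathbb P'}[L]\ge\underline L:=\min_{\Theta\times\Xi}L$; and the assumption that every $z$ admits $\xi$ with $z^\top\psi(\xi)>0$ gives, by minimizing the convex, positively homogeneous, (by hypothesis) strictly positive map $z\mapsto\max_\xi(-z^\top\psi(\xi))$ over the unit sphere, a constant $c>0$ with $m(\theta,z)\ge\underline L+c\|z\|$ on the feasible set. Hence $g\ge\underline L+(1-e^{-r})(c+\rho)\|z\|$ and $g\ge(1-e^{-r})\alpha+e^{-r}\underline L$ there, while the feasible point $z=0$, $\alpha=\max_\xi L(\theta,\xi)$ shows $R^\star\le\overline L:=\max_{\Theta\times\Xi}L$. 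It follows that $R^\star\in[\underline L,\overline L]$ everywhere and that every $\tfrac12$-minimizer $(\alpha^\star,z^\star)$ of the dual lies in one fixed compact box $\bar B\subset\mathbb R\times\mathbb R^d$ independent of $(\theta,\mathbb P')$, so the dual infimum may be taken over $\bar B$.

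Next I would handle the blow-up of $\log Y$ at the boundary $\alpha=m(\theta,z)$ — the only obstruction to joint continuity of $g$. For $\varepsilon_0>0$ define $R^\star_{\varepsilon_0}(\theta,\mathbb P')=\inf\{g(\alpha,z,\theta,\mathbb P'):(\alpha,z)\in\bar B,\ \alpha\ge m(\theta,z)+\varepsilon_0\}$, enlarging $\bar B$ by a fixed amount so this set is nonempty. On it $Y\ge\varepsilon_0$ for all $\xi$, so $\xi\mapsto\log Y$ is continuous and uniformly bounded; combining the uniform continuity of $L$ on the compact $\Theta\times\Xi$ with weak convergence in $\mathbb P'$, the map $(\alpha,z,\theta,\mathbb P')\mapsto\mathbb E_{\mathbb P'}[\log Y]$, and hence $g$, is jointly continuous there. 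Since $m$ is continuous (a maximum over the compact $\Xi$ of the jointly continuous function $(\theta,z,\xi)\mapsto L(\theta,\xi)-z^\top\psi(\xi)$), the correspondence $\theta\mapsto\{(\alpha,z)\in\bar B:\alpha\ge m(\theta,z)+\varepsilon_0\}$ is nonempty, compact-valued and both upper and lower semicontinuous — lower semicontinuity from the obvious Slater point $z=0$ with $\alpha$ slightly above $m(\theta,0)+\varepsilon_0$ — so Berge's maximum theorem \cite[pp.~115--116]{berge1997topological} shows that $R^\star_{\varepsilon_0}$ is continuous on $\Theta\times\Pi$. Finally, letting $\varepsilon_0\downarrow0$: trivially $R^\star\le R^\star_{\varepsilon_0}$, and shifting a near-optimal $(\alpha^\star,z^\star)$ for $R^\star(\theta,\mathbb P')$ to the feasible point $(\alpha^\star+\varepsilon_0,z^\star)$ raises $g$ by $\varepsilon_0-e^{-r}\big(\exp(\mathbb E_{\mathbb P'}[\log(Y+\varepsilon_0)])-\exp(\mathbb E_{\mathbb P'}[\log Y])\big)\le\varepsilon_0$, since the bracketed difference is nonnegative (the logarithm is nondecreasing). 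Hence $R^\star\le R^\star_{\varepsilon_0}\le R^\star+\varepsilon_0$ uniformly on $\Theta\times\Pi$, so $R^\star$ is a uniform limit of continuous functions and is continuous.

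The main obstacle is precisely this last circumvention: the dual objective is truly discontinuous on the boundary of its domain — there $\log Y$ diverges and $\mathbb P'\mapsto\mathbb E_{\mathbb P'}[\log Y]$ is only weakly upper semicontinuous — so Berge cannot be applied directly; the $\varepsilon_0$-regularization moves the problem into the region where $g$ is jointly continuous, and the uniform estimate $R^\star_{\varepsilon_0}\le R^\star+\varepsilon_0$ makes the passage to the limit harmless. The other point requiring care is the uniform compactness of the dual minimizers, which is exactly where the hypotheses $0\in\interior(E)$ and the non-degeneracy of $\psi$ are needed.
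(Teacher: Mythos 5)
Your proof is correct and shares the overall architecture of the paper's argument: both start from the dual representation of Proposition~\ref{prop:duality:DRO}, both use Jensen's inequality together with $z^\top \mathbb E_{\mathbb P'}[\psi(\xi)]\le \sigma_E(z)$, the positivity of $\sigma_E$ on the unit sphere (from $0\in\interior(E)$) and the coercivity of $z\mapsto \max_{\xi\in\Xi}(-z)^\top\psi(\xi)$ (from the non-degeneracy of $\psi$) to confine the dual minimizers to a fixed compact set independent of $(\theta,\mathbb P')$, and both finish with Berge's maximum theorem. Where you genuinely diverge is in how the continuity of the dual objective is secured: the paper first minimizes out $\alpha$ to form $\varphi_E(\theta,z,\mathbb P')$ and then simply cites \cite[Proposition~6]{ref:vanParys:fromdata-17} (applied to the modified loss $L(\theta,\xi)-z^\top\psi(\xi)$) for the joint continuity of $\varphi_E$ on $\Theta\times\Re^d\times\Pi$, whereas you keep $(\alpha,z)$ as joint variables and handle the only analytic obstruction --- the blow-up of $\log(\alpha-L(\theta,\xi)+z^\top\psi(\xi))$ at the boundary $\alpha=\max_{\xi}(L(\theta,\xi)-z^\top\psi(\xi))$ --- by an explicit $\varepsilon_0$-regularization of the constraint, proving continuity of the regularized value $R^\star_{\varepsilon_0}$ via Berge and then showing $R^\star\le R^\star_{\varepsilon_0}\le R^\star+\varepsilon_0$ uniformly, so that $R^\star$ is a uniform limit of continuous functions. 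Your route is more self-contained and makes visible exactly why the weak topology causes no trouble (on the regularized set the integrand is bounded and equicontinuous), at the cost of a slightly longer argument and some care with the lower hemicontinuity of the $\varepsilon_0$-shrunk feasible correspondence; the paper's route is shorter but leans on an external continuity result for the inner $\alpha$-problem. Both proofs use the hypotheses $r>0$, $0\in\interior(E)$ and the non-degeneracy of $\psi$ in exactly the same places.
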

\begin{proof}
Since $r>0$, we may use Proposition~\ref{prop:duality:DRO} to express the DRO predictor as
\begin{equation}\label{eq:DRO:over:g}
R^\star(\theta,\mb P') = \inf_{z\in\mb R^d}\varphi_E(\theta,z,\mb P')
\end{equation}
for all~$\theta\in\Theta$ and~$\mathbb P'\in\Pi$, where the parametric objective function $\varphi_E$ is defined through
\begin{align*}
  \varphi_E(\theta,z,\mb P') =  \inf\limits_{\alpha\ge \underline \alpha(\theta, z)}   \alpha + \sigma_E(z) - e^{-r}\exp\left( \mathbb{E}_{\mathbb{P}'}[\log (\alpha - L(\theta,\xi)+z^\top \psi(\xi))]\right)
\end{align*}
with~$\underline \alpha(\theta, z) =\max_{\xi\in\Xi} L(\theta,\xi) - z^\top \psi(\xi)$. Note that the support function~$\sigma_E$ is continuous because~$E$ is compact. Applying \cite[Proposition~6]{ref:vanParys:fromdata-17} to the modified loss function $L(\theta,\xi)-z^\top \psi(\xi)$ thus implies that~$\varphi_E$ is continuous on~$\Theta\times\mathbb R^d\times \Pi$. To bound~$\varphi_E$ from below by a coercive function, we define
\[
    \kappa = \min_{\|z\|_2=1}\min_{\mb Q\in\Pi} ~\sigma_E(z) - e^{-r}z^\top \mb E_{\mb Q}[\psi(\xi)],
\]
which is a finite constant. Indeed, $\sigma_E$ is continuous because~$E$ is compact, and~$\mb E_{\mb P'}[\psi(\xi)]$ is weakly continuous in~$\mb P'$ because~$\psi$ is a continuous and bounded function on the compact set~$\Xi$. In addition, the unit sphere in~$\mathbb R^d$ is compact, and the set~$\Pi$ is weakly compact. Therefore, both minima in the definition of~$\kappa$ are attained at some~$z^\star\in\Re^d$ with~$\|z^\star\|_2=1$ and some~$\mathbb Q^\star\in\Pi$, respectively. As~$0\in\interior(E)$ and~$z^\star\neq 0$, we have $\sigma_E(z^\star)>0$. In addition, as $\mb Q^\star\in\Pi$, we have~$\mb E_{\mb Q^\star}[\psi(\xi)]\in E$, which implies that $(z^\star)^\top \mb E_{\mb Q^\star}[\psi(\xi)] \leq \sigma_E(z^\star)$. Again as $r>0$, this reasoning ensures that
\begin{equation*}
    \kappa = \sigma_E(z^\star) - e^{-r} (z^\star)^\top \mb E_{\mb Q^\star}[\psi(\xi)] >0.
\end{equation*}
Similarly, we introduce the finite constant
\[
    \underline L = \min_{\theta\in\Theta} \min_{z\in\Re^d} \min_{\xi\in\Xi}  ~ (1-e^{-r}) \underline \alpha(\theta, z)+ e^{-r} L(\theta,\xi).
\]
To see that~$\underline L$ is bounded below, note that the definition of~$\underline\alpha$ and the subadditivity of the minimum operator lead to the estimate
\begin{align*}
    \underline L &\geq (1-e^{-r})\min_{\theta\in\Theta} \min_{\xi\in\Xi}  ~  L(\theta, \xi) + (1-e^{-r}) \min_{z\in\Re^d}\max_{\xi\in\Xi} ~(-z)^\top\psi(\xi) +  e^{-r}\min_{\theta\in\Theta} \min_{\xi\in\Xi} ~ L(\theta,\xi)\\
    &= \min_{\theta\in\Theta} \min_{\xi\in\Xi}  ~ L(\theta, \xi) + (1-e^{-r})\min_{z\in\Re^d}\max_{\xi\in\Xi} ~(-z)^\top\psi(\xi).
\end{align*}
The first term in the resulting lower bound is finite because~$L$ is continuous, while~$\Theta$ and~$\Xi$ are compact. The second term is also finite because the convex function~$\max_{\xi\in\Xi} ~(-z)^\top\psi(\xi)$ is continuous in~$z$ thanks to the continuity of~$\psi$ and the compactness of~$\Xi$. In addition, $\max_{\xi\in\Xi} ~(-z)^\top\psi(\xi)$ is also coercive in~$z$ because of the assumption that for every~$z\in\mathbb R^d$ there exists~$\xi\in\Xi$ with~$z^\top \psi(\xi)>0$. 

The above preparatory arguments imply that
\begin{align*}
    & \varphi_E(\theta,z,\mb P')
    \geq \inf_{\alpha\ge \underline \alpha(\theta, z)} (1-e^{-r}) \alpha + \sigma_E(z) +e^{-r}\mathbb{E}_{\mathbb{P}'}[L(\theta,\xi)]-e^{-r}z^\top \mathbb{E}_{\mathbb{P}'}[\psi(\xi)] \\
    & = (1-e^{-r}) \underline \alpha(\theta, z)+ e^{-r}\mathbb{E}_{\mathbb{P}'}[L(\theta,\xi)] + \left(\sigma_E\left(\frac{z}{\|z\|_2}\right) - e^{-r} \left(\frac{z}{\|z\|_2}\right)^\top\mathbb{E}_{\mathbb{P}'}[ \psi(\xi)]\right) \|z\|_2 \\
    &\geq \underline{L} + \kappa \|z\|_2,
\end{align*}
where the first inequality exploits Jensen's inequality, the equality holds thanks to the positive homogeneity of the support function~$\sigma_E$ and the trivial observation that $e^{-r}<1$, and the second inequality follows from the definitions of~$\underline L$ and~$\kappa$ and the assumption that~$\mb P'\in\Pi$. We thus have
\begin{subequations}
\begin{align}
    \label{eq:lb-varphi}
    \varphi_E(\theta,z,\mb P')\geq \underline L+\kappa \|z\|_2\quad \forall \theta\in\Theta, \;\forall z\in\mathbb R^d,\;\forall \mathbb P'\in\Pi.
\end{align}
Next, define
\[
    \overline L = \max_{\theta\in\Theta} \max_{\xi\in\Xi} L(\theta,\xi),
\]
and note that
\begin{align}
    \label{eq:ub-varphi}
    \inf_{z\in\mb R^d}\varphi_E(\theta,z,\mb P')= R^\star(\theta,\mb P^\star) \leq \overline L\quad \forall \theta\in\Theta, \;\forall \mathbb P'\in\Pi.
\end{align}
\end{subequations}
Taken together, the estimates~\eqref{eq:lb-varphi} and~\eqref{eq:ub-varphi} imply that
\[
    R^\star(\theta,\mb P') = \inf_{z\in\mb R^d} \left\{ \varphi_E(\theta,z,\mb P') : \|z\|_2\leq \frac{\overline L-\underline L}{\kappa}\right\},
\]
which in turn implies via Berge's maximum theorem \cite[pp.~115--116]{berge1997topological} and the continuity of the objective function~$\varphi_E$ on~$\Theta\times\Re^d\times\Pi$ that the DRO predictor~$R^\star$ is indeed continuous on~$\Theta\times\Pi$.

\end{proof}

\subsection{Auxiliary results for Section~\ref{sec:numerical:experiments}}\label{app:numerics}

\textbf{Classification under covariate shift.} 
We construct a synthetic training data consisting of feature vectors~$\widehat x_i$ and corresponding labels~$\widehat y_i$. Under the training distribution~$\mb P$, the feature vectors are uniformly distributed on $[0,1]^{m-1}$, where $m\geq 2$, and the labels are set to
$\widehat y_i = 1$ if $\frac{1}{m-1}\sum_{j=1}^{m-1} (\widehat{x}_i)_j > \frac{1}{2}$ and $\widehat y_i = -1$ otherwise. By construction, we thus have~$\mb E_{\mathbb{P}}[(x,y)]=(0,0)$. The test distribution~$\mathbb{P}^\star$ differs from~$\mb P$. Specifically the probability density function of the features under~$\mb P^\star$ is set to
\begin{equation*} 
    \textstyle{p^\star(x) = \frac{2}{m-1} \sum_{j=1}^{m-1} x_j \quad \forall x\in[0,1]^{m-1},}
\end{equation*}
while the conditional distribution of the labels given the features is the same under~$\mb P$ and~$\mb P^\star$. A direct calculation then reveals that $\mb E_{\mathbb{P}^\star}[x_j]=\frac{m-2}{2(m-1)}+\frac{2}{3(m-1)}=\mu^\star>0$ for all $j=1,\hdots, m-1$. Similarly, one can show that~$\mb E_{\mathbb{Q}}[y]>0$. In the numerical experiments we assume that both~$\mb P$ and~$\mb P^\star$ are unknown. However, we assume to have access to~$N$ i.i.d.\ samples from~$\mb P$, and we assume that~$\mb P^\star$ is known to satisfy~$\mb E_{\mb P^\star}[\psi(\xi)]\in E$, where~$\psi(x,y) = (x,y)$ and~$E=[(\mu^\star-\varepsilon)\cdot 1, (\mu^\star-\varepsilon)\cdot 1]$ for some~$\varepsilon>0$ that is sufficiently small to ensure that~$0\notin E$. This implies that~$\mb P\notin\Pi$.

\begin{figure}[t!p]
\begin{center}
   \includegraphics[width=0.85\textwidth]{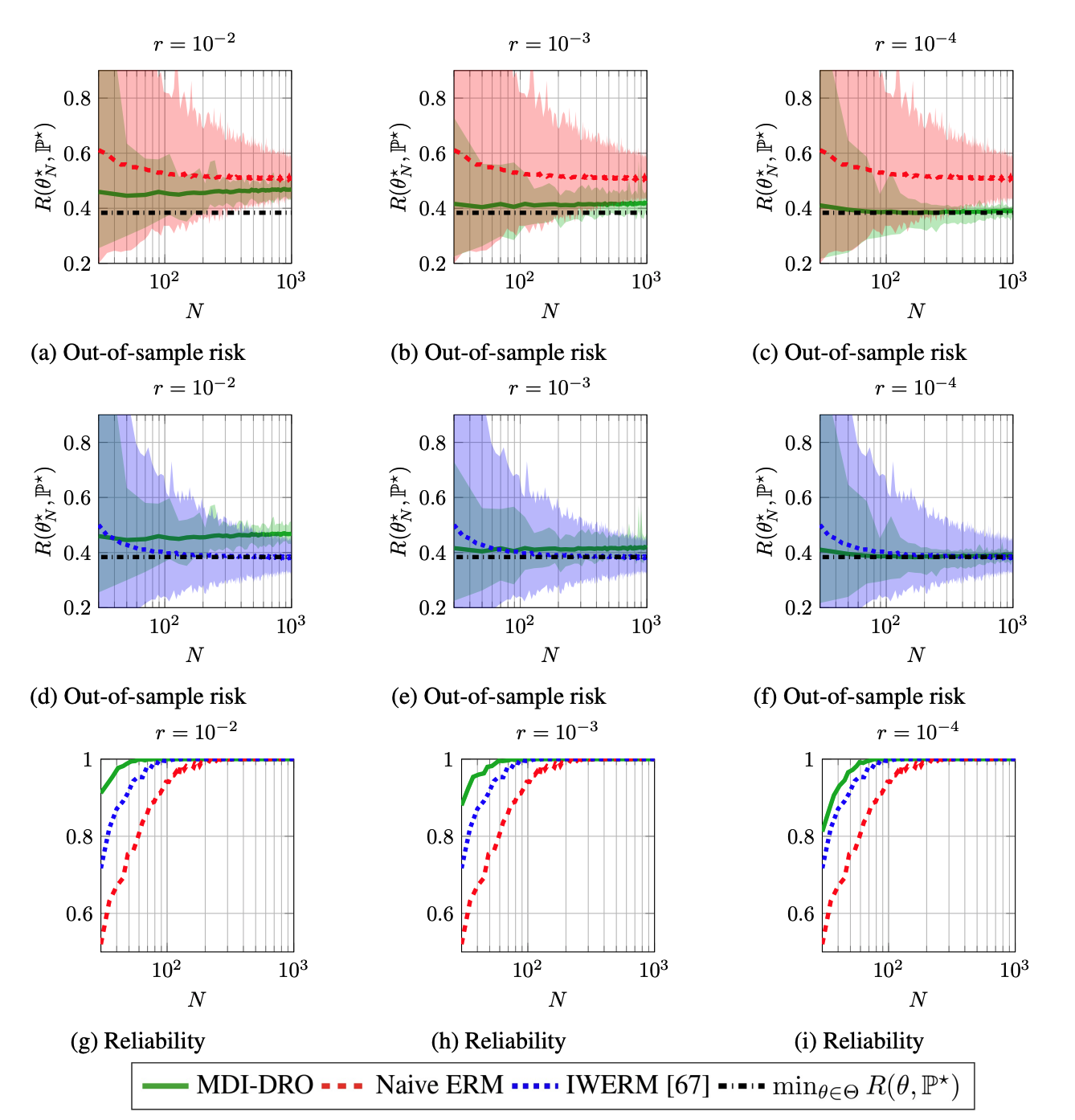}
   \end{center}
         \caption{Additional results for the synthetic dataset with $m=6$ (see also Figure~\ref{fig:toy:out-of-sample:high:D}). Shaded areas and lines represent ranges and mean values across 1000 independent experiments, respectively.} \label{fig:classification:appendix:synthetic}
\end{figure}

\textbf{Inventory control model.}
Consider an inventory that stores a homogeneous good, and let the state variable~$s_i$ represent the stock level at the beginning of period~$i$. The control action~$a_i$ reflects the order quantity in period~$i$, and we assume that any orders are delivered immediately at the beginning of the respective periods. The disturbance~$\zeta_i$ represents an uncertain demand revealed in period~$i$. We assume that the demands are i.i.d.\ across periods and follow a geometric distribution on~$\mb N\cup\{0\}$ with success probability~$\lambda\in(0,1)$. The inventory capacity is denoted by~$\gamma\in\mathbb{N}$, and any orders that cannot be stored are lost. Similarly, we assume that any demand that cannot be satisfied is also lost. The system equation describing the dynamics of the stock level is thus given by
\begin{equation*}
s_{i+1} =  \max\{0,\min\{\gamma, s_i + a_i\} - \zeta_i\}\quad \forall i=0,1,2, \hdots,
\end{equation*}
see also \cite{ref:Hernandez-96}. Our aim is to estimate the long-run average cost generated by a prescribed ordering policy, assuming that the (uncertain) cost incurred in period~$i\in\mb N$ can be expressed as
\begin{equation*}
r(s_i, a_i, \zeta_i)= p a_i + h(s_i + a_i) - v \min\{s_i + a_i,\zeta_i\}.
\end{equation*}
The three terms in the above expression capture the order cost, the inventory holding cost and the profit from sales, where~$p>0$ and~$h>0$ denote the costs for ordering or storing one unit of the good, while~$v>0$ denotes the unit sales price. The expected per period cost thus amounts to
\begin{equation*} 
\textstyle c(s_i,a_i) = pa_i + h(s_i+a_i) - v \frac{(1-\lambda)}{\lambda}\left(1-(1-\lambda)^{(a_i+s_i)}\right).
\end{equation*}
The simulation results shown in Figure~\ref{fig:OPE:inventory} are based on an instance of the inventory control model with state space~$\mathcal{S}=\{1,2,\hdots,5\}$, action space~$\mathcal{A}=\{1,2,\hdots,4\}$, and parameters~$\lambda = 0.2$, $\gamma=5$, $p=0.6$, $h=0.3$ and~$v=1$. The threshold for computing the modified IPS estimator is set to~$\beta=4$. It is easy to verify that, under this model parameterization, the cost function~$c(s_i,a_i)$ is invertible in the sense that~$s_i$ and~$a_i$ are uniquely determined by~$c(s_i,a_i)$; see also Example~\ref{ex:OPE:part:1}.

\bibliographystyle{plain}
\bibliography{references}
\end{document}